\theoremstyle{definition}
\newtheorem{theorem}{\textbf{Theorem}}
\newtheorem{definition}[theorem]{\textbf{Definition}}
\newtheorem{proposition}[theorem]{\textbf{Proposition}}
\newtheorem{remark}[theorem]{\textbf{Remark}}
\newtheorem{lemma}[theorem]{\textbf{Lemma}}
\newenvironment{shrinkeq}[1]%缩短公式之间的距离
{ \bgroup
  \addtolength\abovedisplayshortskip{#1}
  \addtolength\abovedisplayskip{#1}
  \addtolength\belowdisplayshortskip{#1}
  \addtolength\belowdisplayskip{#1}}
{\egroup\ignorespacesafterend}
\begin{document}
\bibliographystyle{IEEEtran}

%-----------------------------------------
%-----------------------------------------
%-----------------------------------------

% data-driven, generalization error bounds, physical based knowledge, power grid analysis
% \title{Generalization Error Bounds of Data-driven models in Power Grid Analysis Considering Physical Knowledge}
% \title{Generalization Error Bounds of Physical Embedded Data-driven models in Power Grid Analysis}
\title{Bounding Regression Errors in \\Data-driven Power Grid Steady-state Models}
% 能否修改题目，加入machine learning关键词，相比较之下，data-driven用烂了
\author{
Yuxiao~Liu,~\IEEEmembership{Student Member,~IEEE,}
Bolun~Xu,~\IEEEmembership{Member,~IEEE,}
Audun~Botterud,~\IEEEmembership{Member,~IEEE,}
Ning~Zhang,~\IEEEmembership{Senior Member,~IEEE,}
and~Chongqing~Kang,~\IEEEmembership{Fellow,~IEEE}

%\thanks{ This work was supported in part by the National Key Research and Development Program of China under Grant 2016YFB0900100, in part by the National Science Foundation of China under Grant 51620105007, and in part by the Scientiﬁc and Technical Project of State Grid: Theoretical and Empirical Research of the Key Technology for the Whole Process Management of Power Grid Operation Risk Based on Multi Source Data Mining.
	
%The authors are with the State Key Laboratory of Power Systems, Department of Electrical Engineering, Tsinghua University, Beijing 100084, China (e-mail: cqkang@tsinghua.edu.cn).}% <-this % stops a space
%\thanks{This work was supported by the National Natural Science Foundation of China (No. 51325702,51307092)}% <-this % stops a space

}

% make the title area
\maketitle

%-----------------------------------------
%-----------------------------------------
%-----------------------------------------

\begin{abstract}
Data-driven models analyze power grids under incomplete physical information, and their accuracy has been mostly validated empirically using certain training and testing datasets. This paper explores error bounds for data-driven models under all possible training and testing scenarios drawn from an underlying distribution, and proposes an evaluation implementation based on Rademacher complexity theory. We answer critical questions for data-driven models: how much training data is required to guarantee a certain error bound, and how partial physical knowledge can be utilized to reduce the required amount of data. 
Different from traditional Rademacher complexity that mainly addresses classification problems, our method focuses on regression problems and can provide a tighter bound.
Our results are crucial for the evaluation and application of data-driven models in power grid analysis. We demonstrate the proposed method by finding generalization error bounds for two applications, i.e., branch flow linearization and external network equivalent under different degrees of physical knowledge.
Results identify how the bounds decrease with additional power grid physical knowledge or more training data.

% Data-driven models have shown great potentials in power grid analysis.
% However, most data-driven models are only validated by testing datasets and can hardly guarantee a promising performance theoretically.
% This limits the applications into the power grid analysis, especially some security related cases.
% In this paper, we provide the generalization error bound to theoretically evaluate the performance of data-driven models.
% On the one hand, we develop from the Rademacher complexity theory to derive the bound.
% We further provide a modified Rademacher complexity (MRC) bound to give a tighter bound on regression problems.
% On the other hand, the proposed MRC bound can evaluate the effect when the data-driven models consider some physical knowledge of power systems.
% The proposed method can answer some fundamental problems:
% 1) How to compute the potential generalization error bounds of different data-driven models;
% 2) What is the influence of different physical knowledge on the generalization error;
% 3) How much training data is required to guarantee a certain generalization error bound.
% At last, we use two cases (branch flow linearization and external network equivalent) and two data-driven models (linear regression and support vector regression) to demonstrate our method.

\end{abstract}
% Note that keywords are not normally used for peer review papers.
\begin{IEEEkeywords}
Learning theory, Rademacher complexity, power flow, linear regression, support vector machine  % concentration inequality, support vector machine, optimization.
\end{IEEEkeywords}

\IEEEpeerreviewmaketitle

%-----------------------------------------
%-----------------------------------------
%-----------------------------------------

\section{Introduction}
Data-driven models are widely applied in power systems to model the system stability, power grid control and optimization strategies, electrical consumer behaviors, etc.
In this paper, we focus on data-driven power grid steady-state modeling that learns mappings and rules among the power grid from operational data such as voltage and power injection, e.g., data-driven power flow linearization~\cite{liu2018data}, external power network equivalence~\cite{yu2017optimal}, optimal reactive power injections learning~\cite{jalali2019designing}, operational security region learning~\cite{cremer2019optimization}, etc. 
Data-driven models are preferred when physical knowledge are too complex~\cite{cremer2019optimization,thams2019efficient} or unavailable~\cite{yu2017optimal,jalali2019designing,zheng2018svm}, or when the power grid analysis involves non-parametric factors such as renewable uncertainties~\cite{babaei2019data,geng2019data} or human behaviors~\cite{bui2019double,carriere2019integrated}.
Nonetheless, the accuracy of data-driven models is usually evaluated empirically on certain testing data and is often unreliable over unfamiliar input data. 
Some recent works address this problem by integrating partial physical knowledge into data-driven models~\cite{cremer2019optimization, yu2017optimal, karagiannopoulos2019data} that intuitively improve the model accuracy.
Some works exploit the vulnerability of data-driven models~\cite{chen2019exploiting} and enhance the robustness of the models by learning under some adversarial cases~\cite{bor2019adversarial}.
Other works predict the range of model inputs that incurs a certain type of classification output, to increase the interpretability of machine learning models~\cite{venzke2019verification}.
However, these approaches are unable to provide any theoretical conclusions about the generalization error of the data-driven model, i.e., the model error over all possible model outcomes.

The inability to bound the generalization error is a significant obstacle that prevents widespread adoption of data-driven models in the power industry. Power system operators must estimate the worst-case model outcome (i.e., the upper bound of errors) for robust operation, and must be able to explain and reinforce the data-driven model with their physical knowledge of the system.  
Besides, how much data is required to bound a data-driven model to the desired accuracy is yet an open question~\cite{ashtiani2018nearly}, as it closely relates to the data quality and model complexity. Simply using more training data may not be ideal due to the trade-off between stable performance and real-time adaptivity~\cite{bhattarai2019big}, i.e., do we want to find a model that best fits all known data, or do we only use recent datasets so the model better adapts to changes in the system states or ambient conditions.

% {\color{blue}
% Some works test the robustness of data-driven methods in the view of adversarial data.
% Still, the above methods cannot evaluate the generalization errors of the data-driven methods.
% % Some works show the vulnerabilities of data-driven methods in power system applications.
% }

We address the challenges as mentioned above by quantifying a theoretical bound on generalization errors based on the Rademacher complexity theory~\cite{koltchinskii2001rademacher}. 
Rademacher complexity is a notation of complexity that measures the richness of a class of real-valued functions.
Additionally, the bound of Rademacher complexity has been further tightened based on the fact that the worst-case performance of a data-driven model is within a more favorable sub-function class~\cite{cortes2013learning,oneto2019local}.
The formulation is interpretable and has been applied to derive generalization bounds in classification problems~\cite{maximov2018rademacher}.
%The richness of the function class can be used to derive the generalization bounds of a learning problem, i.e., a function with lower richness are more likely to have a lower generalization error.
%Thus, Rademacher complexity theory has been well applied to derive generalization bounds in classification problems such as support vector machine (SVM). 
However, existing Rademacher complexity bounds for regression problems are theoretically loose~\cite{mohri2018foundations}, and cannot consider the integration of any physical knowledge. We propose a modified Rademacher complexity (MRC) generalization bound that is tighter for regression problems and incorporates physical knowledge.
The framework proposed in this work only focuses on bounding the error of regression problems. 
We will also briefly discuss how to extend the framework to classification problems in Section~\ref{section_other}.
We derive the bound using the modified logarithmic Sobolev inequality~\cite{boucheron2013concentration} that satisfies the conditions of regression problems (where the output of the regression is continuous without explicit bound).
%Furthermore, we formulate the bound evaluation process as a numerical optimization problem which  considers the effects of physical knowledge. 
The contribution of our work is summarized as follows:
\begin{enumerate}
    \item We provide a tighter bound than conventional Rademacher complexity bound for the generalization error in regression problems, which are typical for conducting data-driven power grid analysis.
    \item We show how much data is required to theoretically bound a data-driven model under all possible training and testing datasets drawn from an underlying distribution, which is critical to guarantee the reliability of data-driven models in power grid applications.
    %\item We formulate the MRC evaluation as a numerical optimization problem, in which physical knowledge can be conveniently incorporated as problem constraints. Our approach measures the generalization error from different physical knowledge in power systems.
    \item We quantify how physical knowledge can reduce generalization error by incorporating physical knowledge as problem constraints in the MRC model.
    \item Our method applies in power flow analysis and in external network equivalence, which involves the use of linear regression and support vector regression (SVR).
\end{enumerate}

The remainder of this paper is organized as follows. 
Section \ref{section_problem_definition} introduces the preliminaries and the problem statements.
In section \ref{section_MRC}, we propose the MRC generalization error bound, which includes the theoretical results and the computation methods.
Section \ref{section_branch_flow} and section \ref{section_external_network} provide the case studies.
Finally, section \ref{section_conclusion} draws the conclusions.

%-----------------------------------------
%-----------------------------------------
%-----------------------------------------

\section{Preliminaries and Problem Statement}\label{section_problem_definition}
\subsection{Data-driven Power Grid Modeling}
\begin{figure}[htb!]
	\centering
		\includegraphics[width=0.9\linewidth]{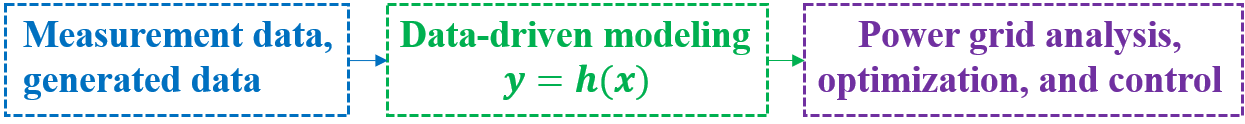}
	\caption{
	Data-driven approach in power grid applications.
	}
	\label{fig_data-driven_problem}
\end{figure}
In this work, we focus on the problem of data-driven power grid steady-state modeling.
This type of problem starts from acquiring data either generated or from measurement devices.
Then, a mapping function is learned in the formulation of $y=h(\bm{x})$.
The input $\bm{x}$ can be power grid operational data such as voltage, power injection, branch flow, etc.
The output $y$ can be some power grid operational data~\cite{liu2018data,yu2017optimal}, some optimal control rules~\cite{jalali2019designing,karagiannopoulos2019data}, or the current power grid status~\cite{cremer2019optimization}.
At last, the data-driven models are used in power system analysis, optimization, and control problems.
The above process is shown in \figurename~\ref{fig_data-driven_problem}.
In this work, we mainly address the data-driven modeling part to evaluate the generalization error of function $y=h(\bm{x})$.

\subsection{Generalization Error}
We start by introducing the formal definition of generalization error and the probably approximately correct (PAC) learning framework, firstly proposed by Valiant~\cite{valiant1984theory}. 
Subsequently, further work on generalization error theory includes Vapnik Chervonenkis (VC) dimension~\cite{vapnik1971uniform}, PAC-Bayes~\cite{seeger2002pac}, and Rademacher complexity~\cite{koltchinskii2001rademacher} have been established over the PAC learning framework.
\begin{definition}[Generalization error]
\label{def_ge}
  Given a sample $x\in \mathcal{X}$ from an underlying distribution $\mathcal{D}$, a hidden ground truth mapping $f(\cdot)$, and a hypothesis class (a.k.a. a data-driven model class such as linear regression) $h \in \mathcal{H}$, the generalization error, $L(h)$, is:
  \begin{equation}\label{eq_GenError}
    L\left(h\right)=\underset{\bm{x} \sim \mathcal{D}}{\mathrm{E}}\big[l(h,x)\big],
  \end{equation}
  where $\mathrm{E}$ denotes the expected value of random variables, and $l(h,x)$ is the loss function mapping from $\mathcal{H}\times\mathcal{X}\rightarrow\mathbb{R}$.
\end{definition}

The generalization error measures the performance over an unseen data distribution $\mathcal{D}$ and cannot be precisely computed.
The training datasets are drawn fromt the unseen data distribution $\mathcal{D}$.
In this paper, we choose to bound the generalization error to evaluate the model performance.
The loss function calculates the loss from an input hypothesis $h$ and the sample point $x$.
In this paper, we use the absolute error as the loss function:
\begin{equation}
  l(h,x)=\left|f(x)-h(x)\right|.
\end{equation}

\begin{definition}[Empirical error]
\label{def_empirical}
The empirical error $\hat{L}_{\bm{x}}(h)$ over the $m$ training samples $\{x_i\}_{i=1}^m$ given a hypothesis $h$ is:
\begin{equation}
    \hat{L}_{\bm{x}}(h)=\frac{1}{m}\sum_{i=1}^{m}l(h,x_i)
\end{equation}
\end{definition}
\noindent
Note that although $f(\cdot)$ is unknown, $f(x_i)$ is known for training sample $x_i$.
Our MRC theory is based on the probably approximately correct (PAC) learning framework, firstly proposed by Valiant~\cite{valiant1984theory}.
The PAC learning framework defines the learnability of a hypothesis class:
\begin{definition}[PAC learning]
  \label{def_PAC}
  A hypothesis class $\mathcal{H}$ is said to have generalization error $\epsilon$ with probability $1-\delta$, if under the unknown distributions $\mathcal{D}$, the generalization error $L\left(h\right)$ of any hypothesis $h\in \mathcal{H}$ satisfies the following inequality:
  \begin{equation}\label{eq_PAC}
    \underset{\bm{x}\sim{\mathcal{D}}}{\mathop{\Pr }}\,\left[ L\left( {h} \right)\le \epsilon  \right]\ge 1-\delta.
  \end{equation}
\end{definition}
\noindent
The PAC learning framework utilizes the probably (at least $1-\delta$ probability) approximately correct (at most $\epsilon$ error under unknown distributions) concept to describe the generalization performance of a hypothesis.

%Under the PAC learning framework, generalization bounds based on theories of Vapnik Chervonenkis (VC) dimension~\cite{vapnik1971uniform}, PAC-Bayes~\cite{seeger2002pac}, and Rademacher complexity~\cite{koltchinskii2001rademacher} have been proposed over the past decades.

% Although  generalization bounds have been widely explored in past researches, there are yet some  pitfalls that make the theories difficult to be applied into practical engineering problems.
The VC dimension is a pioneering theory to measure the complexity of a hypothesis class~\cite{vapnik1971uniform} but does not consider sample distributions, which makes VC bounds loose with a conservative estimation of the generalization error~\cite{bartlett2005local}.
%However,  VC dimension bound is distribution free. That is, the tightness of the bound is irrelevant with the distribution of the sample, which makes  VC dimension bound loose with conservative estimation of the generalization error~\cite{bartlett2005local}.
PAC-Bayes bound is a generic theory to evaluate generalization errors in a Bayesian learning framework, which has been extensively explored recently~\cite{germain2016pac,reeb2018learning,holland2019pac}.
Different from other theories that bound the generalization error $L\left( {h} \right)$,  PAC-Bayes bounds the error according to a posterior distribution $\mathcal{\hat{\pi }}$ over $\mathcal{H}$: $\mathbf{E}_{h\sim\mathcal{\hat{\pi }}}L\left(h\right)$,
where parameters in $h$ are according to the posterior distribution $\mathcal{\hat{\pi }}$.
This approach for measuring the bound, though can help to develop better learning algorithms, may lose their physical meaning in practical engineering applications.

The Rademacher complexity bound uses the information of the sample distribution and derives a tighter bound compared to VC dimension~\cite{mohri2018foundations}. It also directly bounds the $L\left(h\right)$ compared with the PAC-Bayes bound and shows great potential in analyzing data-driven models of engineering problems. The definition of the Rademacher complexity is as follows:
\begin{definition}[Rademacher complexity~\cite{koltchinskii2001rademacher}]
  \label{def_empirical_rademacher}
  Given samples $\bm{x}=\left\{x_{i}\right\}_{i=1}^{m} \sim \mathcal{D}$, the Rademacher complexity of a hypothesis class $\mathcal{H}$ is
    \begin{equation}\label{eq_rademacher}
    \mathfrak{R}(\mathcal{H})=\underset{\bm{x} \sim D}{\mathrm{E}}\left[\widehat{\mathfrak{R}}(\mathcal{H})\right].
  \end{equation}
  where $\widehat{\mathfrak{R}}(\mathcal{H})$ is the empirical Rademacher complexity based on the loss function of the samples, i.e.:
    \begin{equation}\label{eq_empirical_rademacher}
    \widehat{\mathfrak{R}}(\mathcal{H})=\underset{\bm{\sigma}}{\mathrm{E}}\left[\sup _{h \in \mathcal{H}} \frac{1}{m} \sum_{i=1}^{m} \sigma_{i} l\left(h,x_{i}\right)\right].
  \end{equation}
  where $\boldsymbol{\sigma}=\left(\sigma_{1}, \ldots, \sigma_{m}\right)^{T}$ are i.i.d. random variables with $\operatorname{Pr}\left[\sigma_{i}=1\right]=\operatorname{Pr}\left[\sigma_{i}=-1\right]=0.5$, and $\sup$ denotes the supremum of the formulation.
\end{definition}
The Rademacher complexity measures the expected ($\mathrm{E}_{\mathbf{\sigma}}$) richness of a function family $\mathcal{H}$: how well functions in $\mathcal{H}$ can best correlate ($\sup _{h \in \mathcal{H}}$) with random noise ($\sigma_{i}$).
However, the Rademacher complexity bound theory primarily handles classification problems and is less often applied to regression problems~\cite{kuck2018approximate,maximov2018rademacher,oneto2019local}, while numerical calculation strategies of Rademacher complexity is seldom developed.
% Although Rademacher complexity is mostly developed for classification problem, it is also suitable for regression problem, and this area hasn't been well explored.
We therefore propose an MRC generalization bounds based on the modified logarithmic Sobolev inequality~\cite{boucheron2013concentration}, to provide a tighter bound for regression problems. We show that although Rademacher complexity is mostly developed for classification problems, the MRC is also suitable for the regression problem. 
% Furthermore, we formulate the computation of MRC into an optimization problem, which is able to consider the physical knowledge into the generalization analysis.

\subsection{Physical Knowledge in Data-driven Models}
% We wish to characterize the generalization error

% Cremer~\textsl{et al.} trained a physically interpretable decision tree for power system security~\cite{cremer2019optimization};
% Yu~\textsl{et al.} proposed a measurement-based power system steady state external network equivalent method considering partial physical operation constraints~\cite{yu2017optimal};
% Karagiannopoulos~\textsl{et al.} designed a data-driven local control framework for distributed energy resources that added physical constraints in machine learning techniques~\cite{karagiannopoulos2019data}.
%Physical knowledge can intuitively improve data-driven models accuracy but provide no theoretical conclusion about the generalization error, i.e., the model error over unknown datasets. 

%还是要再提一下MRC
% The above three bounds cannot consider the effect of physical knowledge of a data-driven model, which limit the application potential in power industry.
The physical knowledge space $\mathcal{P}$ describes what we know about the target system in addition to measurements.
For instance, in power grid analysis, this includes the topology, line impedance, or physical principles such as power flow models. 
Incorporation of physical knowledge stabilizes data-driven model performance as it reduces model complexity by intersecting the hypothesis class space with the physical knowledge space $\mathcal{H}\cap \mathcal{P}$ as shown in \textbf{Definition~\ref{def_physical}}.
% To quantify the effects of physical knowledge, it is important to generically represent the knowledge in a data-driven model.

% Hence, given a hypothesis space $\mathcal{H}$ of a data-driven model, and a physical knowledge space $\mathcal{P}$, the finial hypothesis $h$ is learned from the intersection of two spaces $\mathcal{H}\cap\mathcal{P}$.
\begin{definition}[Data-driven models with physical knowledge]
  \label{def_physical}
  Given samples $\bm{x}=\left\{x_{i}\right\}_{i=1}^{m} \sim \mathcal{D}$ and the physical knowledge based space $\mathcal{P}$ derived from some physical rules, the model aims at finding the hypothesis $h\in \mathcal{H}\cap\mathcal{P}$ while minimizing the empirical error $\hat{L}_{\bm{x}}(h)$.
\end{definition}

%We consider physical knowledge into the generalization bound analysis by formulating the evaluation as an optimization problem, and adding physical constraints into the evaluation problem.
In practice, physical knowledge $\mathcal{P}$ can be described as data-driven model parameter constraints (i.e. $p(\bm{x},h)\leq0$) during the training stage.
For example, Yu~\textsl{et al.}~\cite{yu2017optimal} added external network parameter constraints (derived from system maximum and minimum operating modes) to least squares regression models; Karagiannopoulos \textsl{et al.}~\cite{karagiannopoulos2019data} added box constraints (derived from imbalanced control penalties) to SVR models.
%More details will be demonstrated in Section~\ref{section_branch_flow} and Section~\ref{section_external_network}. %Note that in this paper, we only evaluates the effect of physical knowledge that can be converted to the constraints of the original hypothesis space.

% After evaluation of the data-driven power grid analysis problem, we present the generalization error results in the form of PAC learning in \textbf{Definition \ref{def_PAC}}. That is, the output is the bound $\epsilon\left(\mathcal{H},\mathcal{P},S,m,\delta\right)$
\subsection{Problem Statement}
We seek a generalization error bound $\epsilon$ that reflects the underlying hypothesis space $\mathcal{H}$, the physical knowledge space $\mathcal{P}$, and the training samples $\bm{x}$, formally stated as follows:
\begin{equation}\label{eq_bound}
    \epsilon=
    \mathbf{MRC}\left(\mathcal{H}\cap\mathcal{P},\bm{x}\right)
    % \text{, where:}
\end{equation}
% \begin{itemize}
%     \item $\epsilon$: the generalization error bound of the data-driven model, under the framework of PAC learning (\textbf{Definition \ref{def_PAC}});
%     \item $\mathcal{H}$: the hypothesis space of a data-driven model;
%     \item $\mathcal{P}$: the physical knowledge space;
%     \item $S$: samples $(X, Y)=\left\{\left(x_{i}, y_{i}\right)\right\}_{i=1}^{m} \in\left(\mathcal{X} \times \mathcal{Y}\right)^{m}$ with an unknown function $f$ that satisfies $y_j=f\left(x_j\right)$ for all $\left(x_{j}, y_{j}\right) \in \left(\mathcal{X} \times \mathcal{Y}\right)$;
%     % \item $\delta$: the maximum probability of bound violation.
% \end{itemize}

\begin{figure}[ht!]
	\centering
	\begin{minipage}[ht]{4.2cm}
		\centering
		\includegraphics[width=.85\linewidth]{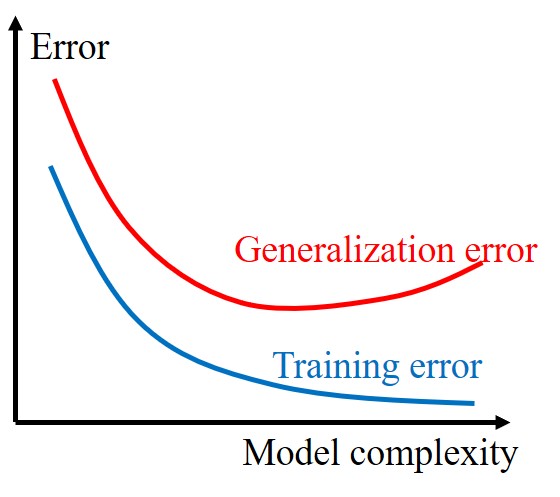}
		\centerline{\footnotesize{(a)}}
		\label{fig_illustrative_a}
	\end{minipage}
	\begin{minipage}[ht]{4.2cm}
		\centering
		\includegraphics[width=.8\linewidth]{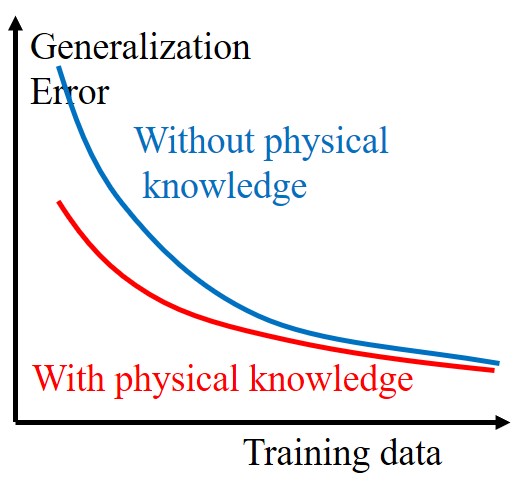}
		\centerline{\footnotesize{(b)}}
		\label{fig_illustrative_b}
	\end{minipage}
	\vspace{-.3cm}
	\caption{Illustration of how data-driven model complexity, physical knowledge, and training data affect the generalization error. (a) The effects of data-driven model complexity. (b) The effects of physical knowledge and training data.}
	\label{fig_illustrative}
\end{figure}

Our proposed approach provides a way to benchmark the choice of model, training data size, and physical knowledge in data-driven grid modeling. For example, a linear regression model with more regressors achieves better training results, but its generalization error may increase due to over-fitting, as illustrated in~\figurename\ref{fig_illustrative}(a); while some well established physical knowledge improves the generalization error, this improvement diminishes with larger training datasets, as illustrated in~\figurename\ref{fig_illustrative}(b).
The above statements will also be validated later in the case study.

% Our results address the following issues in data-driven power grid analysis:
% \begin{enumerate}
%     \item \emph{Data-driven model}: to utilize simple models with poor training loss and stable performance, or to utilize complex models with better training accuracy and more unstable performance.
% As shown in \figurename\ref{fig_illustrative}(a), the generalization error may not decrease with training error when the model complexity increases, which incurs overfitting.
%     \item \emph{Physical knowledge}: to what extent the embedded physical knowledge of the power system can reduce the generalization error bounds.
% As shown in \figurename\ref{fig_illustrative}(b), embedding appropriate physical knowledge will decrease the generalization error.
%     \item  \emph{Training data}: how much training data is required to obtain a certain generalization error bound.
% As shown in \figurename\ref{fig_illustrative}(b), more training data will decrease the generalization error.
% With sufficient training data, the effects of physical knowledge will reduce and more complex model becomes applicable.
% \end{enumerate}

%-----------------------------------------
%-----------------------------------------
%-----------------------------------------
\section{The Modified Rademacher Complexity Bound}
In this section, we introduce the proposed MRC bound.
We first present the theoretical derivation of the bound.
Then, we adopt an iteration strategy to tighten the bound further.
At last, we provide the approach to evaluate the bound numerically.
\label{section_MRC}
\subsection{Theoretical Result}
\label{section_theo_deri}
We start by introducing the main Theorem, which states that the empirical error of any hypothesis $h$ in a class $\mathcal{H}(e)$ (which will be defined later) can be bounded based on the sample-based empirical Rademacher complexity: 
\begin{theorem}[Empirical modified Rademacher complexity bound]
  \label{theo_empirical_MRC}
  We can bound the generalization error, for $\forall \delta \in (0,1)$ and $\forall h \in \mathcal{H}(e)$, with at least $1-\delta$ probability:
  \begin{equation}\label{eq_empir_MRC}
    L\left(h\right) \leq \hat{L}_{\bm{x}}(h)+2\widehat{\mathfrak{R}}(\mathcal{H}(e))+3e\sqrt{(2\log\frac{2}{\delta })/m},
  \end{equation}
\end{theorem}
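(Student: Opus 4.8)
The plan is to follow the classical recipe for Rademacher generalization bounds — concentration of a supremum deviation followed by symmetrization — but to replace the bounded-differences (McDiarmid) step with the modified logarithmic Sobolev inequality so that the argument survives the fact that the regression loss $l(h,x)=|f(x)-h(x)|$ is not uniformly bounded. First I would introduce the uniform deviation
\begin{equation}
\Phi(\bm{x})=\sup_{h\in\mathcal{H}(e)}\big(L(h)-\hat{L}_{\bm{x}}(h)\big),
\end{equation}
so that a high-probability upper bound on $\Phi(\bm{x})$ controls $L(h)-\hat{L}_{\bm{x}}(h)$ simultaneously for every $h\in\mathcal{H}(e)$. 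Resampling a single coordinate $x_i$ perturbs $\Phi$ by at most $\tfrac{1}{m}$ times the local variation of the loss; because that variation is not a fixed constant in the regression setting, McDiarmid's inequality cannot be applied directly and a finer concentration tool is required.

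The key step is to show that $\Phi(\bm{x})$ concentrates around its mean through the modified logarithmic Sobolev inequality. I would bound the entropy of $e^{\lambda\Phi}$ by the expected squared single-coordinate increments of $\Phi$, controlling these increments through the definition of the sub-class $\mathcal{H}(e)$, whose restriction makes the exponential moments of the loss finite and is precisely what forces the Euler constant $e$ into the estimate. Integrating the resulting differential inequality for the log-moment-generating function yields a sub-Gaussian tail, so that with probability at least $1-\delta/2$ we obtain
\begin{equation}
\Phi(\bm{x})\le \mathrm{E}[\Phi(\bm{x})]+e\sqrt{(2\log\tfrac{2}{\delta})/m}.
\end{equation}
I expect this to be the main obstacle: justifying the log-Sobolev entropy estimate for an unbounded loss, and tracking the constant $e$ cleanly through the exponential-moment computation so that it reappears as the advertised factor.

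Next I would bound the expectation $\mathrm{E}[\Phi(\bm{x})]$ by symmetrization. Introducing an independent ghost sample, inserting Rademacher signs $\sigma_i$, and pulling the supremum inside the expectations gives $\mathrm{E}[\Phi(\bm{x})]\le 2\mathfrak{R}(\mathcal{H}(e))$, the population Rademacher complexity of the induced loss class. This step is standard and exact in expectation, so it does not contribute any additional deviation term; its only role is to convert the mean of $\Phi$ into the complexity measure $\mathfrak{R}$.

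Finally I would pass from the population complexity to the empirical one. Since $\mathfrak{R}(\mathcal{H}(e))=\mathrm{E}[\widehat{\mathfrak{R}}(\mathcal{H}(e))]$, a second concentration argument of the same log-Sobolev type shows, with probability at least $1-\delta/2$, that $2\mathfrak{R}(\mathcal{H}(e))\le 2\widehat{\mathfrak{R}}(\mathcal{H}(e))+2e\sqrt{(2\log\tfrac{2}{\delta})/m}$, the factor $2$ propagating from the $2\mathfrak{R}$ coefficient. A union bound over the two high-probability events yields overall probability at least $1-\delta$ and is exactly what produces the $\log(2/\delta)$ inside the tail; adding the $e\sqrt{(2\log\tfrac{2}{\delta})/m}$ deviation from the first step to this $2e\sqrt{(2\log\tfrac{2}{\delta})/m}$ term collapses the two contributions into the single constant $3e\sqrt{(2\log\tfrac{2}{\delta})/m}$ and establishes \eqref{eq_empir_MRC}.
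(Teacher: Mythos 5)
Your overall route is exactly the paper's: introduce the uniform deviation $\Phi(\bm{x})=\sup_{h\in\mathcal{H}(e)}\bigl(L(h)-\hat{L}_{\bm{x}}(h)\bigr)$, concentrate it about its mean using the modified-log-Sobolev (bounded variance) inequality in place of McDiarmid, bound $\mathrm{E}[\Phi]$ by $2\mathfrak{R}(\mathcal{H}(e))$ via ghost-sample symmetrization, concentrate $\mathfrak{R}(\mathcal{H}(e))$ about $\widehat{\mathfrak{R}}(\mathcal{H}(e))$ by a second application of the same inequality, and take a union bound over the two $\delta/2$ events so that the deviations add as $e\sqrt{\cdot}+2e\sqrt{\cdot}=3e\sqrt{\cdot}$. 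This is precisely the decomposition in the paper's appendix.

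There is, however, one genuine misunderstanding that would derail your execution of the step you yourself flag as the main obstacle: the factor $e$ in the theorem is \emph{not} Euler's constant, and it does not come out of any exponential-moment computation. It is the MSE bound parameter that defines the sub-class $\mathcal{H}(e)$, namely $\frac{1}{m}\sum_{i=1}^{m}l(h,x_i)^2\le e^2$ for all $h\in\mathcal{H}(e)$. It enters the bound purely through the variance proxy of the concentration inequality: resampling one coordinate $x_i$ changes $\hat{L}_{\bm{x}}(h)$ by $c_i=(1/m)\,l(h,x_i)$, so the defining property of $\mathcal{H}(e)$ gives $\sum_{i=1}^{m}c_i^2\le e^2/m=:v$, and the tail bound $\Pr\bigl[\phi-\mathrm{E}[\phi]\ge t\bigr]\le \exp\bigl(-t^2/(2v)\bigr)$ evaluated at $t=e\sqrt{2\log(2/\delta)/m}$ yields failure probability exactly $\delta/2$. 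The same $v$ is used in the second concentration step for $\widehat{\mathfrak{R}}$. Consequently there is no difficulty of ``tracking Euler's constant cleanly through the exponential moments'' --- the constant to track is the hypothesis-class parameter, and it appears linearly simply because $\sqrt{v}$ is proportional to $e$. Had you pursued your plan literally and tried to make the numerical constant $2.718\ldots$ emerge from the entropy method, the step would fail; once $e$ is read correctly, your outline coincides with the paper's proof and goes through.
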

\noindent
where $\hat{L}_{\bm{x}}(h)$ is the empirical error as in \textbf{Definition~\ref{def_empirical}}, $\delta$ is the PAC learning probability as in \textbf{Definition~\ref{def_PAC}}, $m$ is the sample size, 
% and H is the narrowed .. class whose mse is bounded by e over... after...
and $e$ is an upper bound of the mean square error (MSE) over arbitrary $m$ testing samples $\bm{x}=\{x_i\}^m\sim\mathcal{D}$ after arbitrary $m$ training samples $\forall\bm{x'}=\{x'_i\}^m\sim\mathcal{D}$, defined below as:
\begin{equation}
\begin{aligned}\label{eq_condition_MRC}
  &\frac{1}{m}\sum_{i=1}^{m}l(h,x_i)^2 \leq e^2, h\in \mathcal{H}(e)\\
  &\mathcal{H}(e):=\left\{h = \mathrm{Tr}(h',\bm{x'}),\forall h'\in\mathcal{H}\right\},
\end{aligned}
\end{equation}
where $\mathrm{Tr}(\cdot)$ is the training of the data-driven model, and $\mathcal{H}(e)$ denotes the narrowed hypothesis class bounded by the MSE of $e^2$ after training any $m$ samples. The Rademacher complexity generalization can be intuitively interpreted as follows:
The generalization error of a trained hypothesis class $\mathcal{H}(e)$ has a high probability of being large when:
1) the empirical error (training error) is large (the first part of~\eqref{eq_empir_MRC});
2) the hypothesis class $\mathcal{H}(e)$ is ``complex'' (the second part of~\eqref{eq_empir_MRC});
3) the training sample size is small (the third part of~\eqref{eq_empir_MRC}).
The first part of~\eqref{eq_empir_MRC} can be obtained after training.
The third part of~\eqref{eq_empir_MRC} can be directly calculated by the value of $e$, $\delta$, and $m$.
We will introduce how to evaluate the second part in Section~\ref{section_numerical_solution}.

Assumption \eqref{eq_condition_MRC} denotes that the MSE is bounded after training arbitrary $m$ samples.
It is more flexible than assuming the maximum value of the loss function $l(h,x_i)$ is bounded in the traditional Rademacher complexity bound~\cite{koltchinskii2001rademacher}.
This is reasonable in classification problems yet difficult to be applied in regression problems, because $l(h,x_i)$ may be extremely large.
% Previous studies apply a conservative value to bound the $l(x_i)$ in regression problems but result in a loose generalization bound~\cite{mohri2018foundations}.
In our MRC bound, however, one only needs to set an upper bound of MSE, which is more applicable in regression problems.
We use an iterative strategy to configure the upper bound for MSE, as shown in Section \ref{section_iteration}.

The proof of \textbf{Theorem~\ref{theo_empirical_MRC}} is based on two key results. First, we show that the empirical error can be bounded by the Rademacher complexity (which is the expectation of the empirical Rademacher complexity) under $1-\delta$ probability as
\begin{equation}\label{eq_MRC}
  L\left(h\right) \leq \hat{L}_{\bm{x}}\left(h\right)+2\mathfrak{R}(\mathcal{H}(e))+e\sqrt{(2\log\frac{2}{\delta })/m}.
\end{equation}
whose proof is primarily based on the use of the modified logarithmic Sobolev inequality from Section 6.4 of~\cite{boucheron2013concentration}.
Starting with this inequality, we can derive the bound of the generalization error $L\left(h\right)$ from assumption \eqref{eq_condition_MRC}.
Still, the result in~\eqref{eq_MRC} cannot be used directly to quantify the generalization error, because we cannot obtain the value of Rademacher complexity from the training samples $\bm{x}$.
Instead, we show that the Rademacher complexity can be bounded by the empirical Rademacher complexity, under $1-\delta/2$ probability:
\begin{equation}\label{eq_empir_MRC1}
    \mathfrak{R}(\mathcal{H})
    \leq
    \widehat{\mathfrak{R}}(\mathcal{H})
    +e\sqrt{(2\log\frac{2}{\delta })/m},
  \end{equation}
which is derived by combining \eqref{eq_empir_MRC1} and \eqref{eq_MRC}. 
The full proof is listed in the Appendix.

\begin{remark}[Physical knowledge]
The physical knowledge $\mathcal{P}$ narrows the hypothesis space and reduces the Rademacher complexity: $\widehat{\mathfrak{R}}(\mathcal{H\cap\mathcal{P}}) \leq \widehat{\mathfrak{R}}(\mathcal{H})$.
The bound in \eqref{eq_empir_MRC} is also valid for $\widehat{\mathfrak{R}}(\mathcal{H\cap\mathcal{P}})$.

% \begin{equation}
%   \label{eq_physical_rademacher}
%   \widehat{\mathfrak{R}}(\mathcal{H}_{\mathcal{P}}^m)\text{, } \mathcal{H}_{\mathcal{P}}^m:=\left\{h|h \in \mathcal{H}\cap\mathcal{P}\right\}.
% \end{equation}
\end{remark}

%{\color{blue} 1) The model is not unique when computing Rademacher; 2) Search the best fitness within an envelope. See local Rademacher complexities.}
% Note that in Section 2 we mentioned the hypothesis class is the model class that trained by m samples.

\subsection{Iteration Strategy}
\label{section_iteration}

In this section, we implement an iteration strategy for the configuration of the upper bound of the MSE $e^2$.
The basic idea is to assume an initial $e$ and narrow the $e$ and $L(h)$ through iterations.
% Such strategy assume that the distribution of the error $l(h,x_i)$ has the following property: the square root mean is smaller than the arithmetic mean times a factor $k$.
% \begin{equation}\label{eq_square_root_arithmetic}
%     \left(
%       \frac{1}{m}
%       \sum_{i=1}^{m}{l(h,x_i)}^2
%     \right)
%     ^
%     {1/2}
%     \leq
%     k
%     \left(
%     \frac{1}{m}
%     \sum_{i=1}^{m}l(h,x_i)
%     \right).
% \end{equation}
% It assumes a relationship between $e$ and $L(h)$, which will be used in the iteration strategy. 
% The factor $k$ can be obtained by assuming a certain distribution of the error and then sampling on this distribution, which is demonstrated in the case studies.
% We implement the following iteration strategy:
\begin{enumerate}
    \item \textbf{Step 1}: Assume a large enough initial $e$. Set the maximum iteration steps $I$ and the current iteration step $i=1$. %Set the maximum bound violation probability $\lambda$.
    \item \textbf{Step 2}: Evaluate the generalization bound $L\left(h\right)$ using \textbf{Theorem~\ref{theo_empirical_MRC}} (equation~\eqref{eq_empir_MRC}).% under maximum bound violation probability $\lambda/I$. The bound $L\left(h\right)$ is under probability $i\lambda/I$.
    %The algorithmic implementation is introduced in Section~\ref{section_numerical_solution}.
    \item \textbf{Step 3}: Update $e$ by $e^{new}\leftarrow kL\left(h\right)$, where $k$ is a constant factor that represents how tight we can bound the MSE ($e^{new}$) of the next iteration from the mean absolute error (MAE) bound ($L(h)$) of this iteration. See Appendix for the configuration of factor $k$. If $e^{new}\textless e$ and $i\textless I$, set $i\leftarrow i+1$ and do \textbf{Step~2}. Else, do \textbf{Step 4}.
    \item \textbf{Step 4}: Output the final generalization bound as $L\left(h\right)$.
\end{enumerate}

% In \textbf{Step 3}, $k$ is a constant that represents how tight we can bound the $e$ of the next iteration from the generalization error bound $L(h)$ of this iteration.
% See Appendix for the configuration of factor $k$.
% The intuition of this iterative strategy is that the we can bound the generalization error by only regarding the complexity of the hypothesis that have small error (a ``local'' function class).
% Similar strategies are adopted in the local Rademacher complexity theories~\cite{cortes2013learning}.
% Our strategy differs in the way that we use different concentration inequalities that designed for regression problems.
Finally, we divide the evaluation of $L(h)$ in equation~\eqref{eq_empir_MRC} (in \textbf{Step~2}) into three parts: the empirical error, the empirical Rademacher complexity, and the randomness of the sample distribution. The iterative framework is shown in \figurename\ref{fig_framework}.

\begin{figure}[htb!]
	\centering
		\includegraphics[width=.9\linewidth]{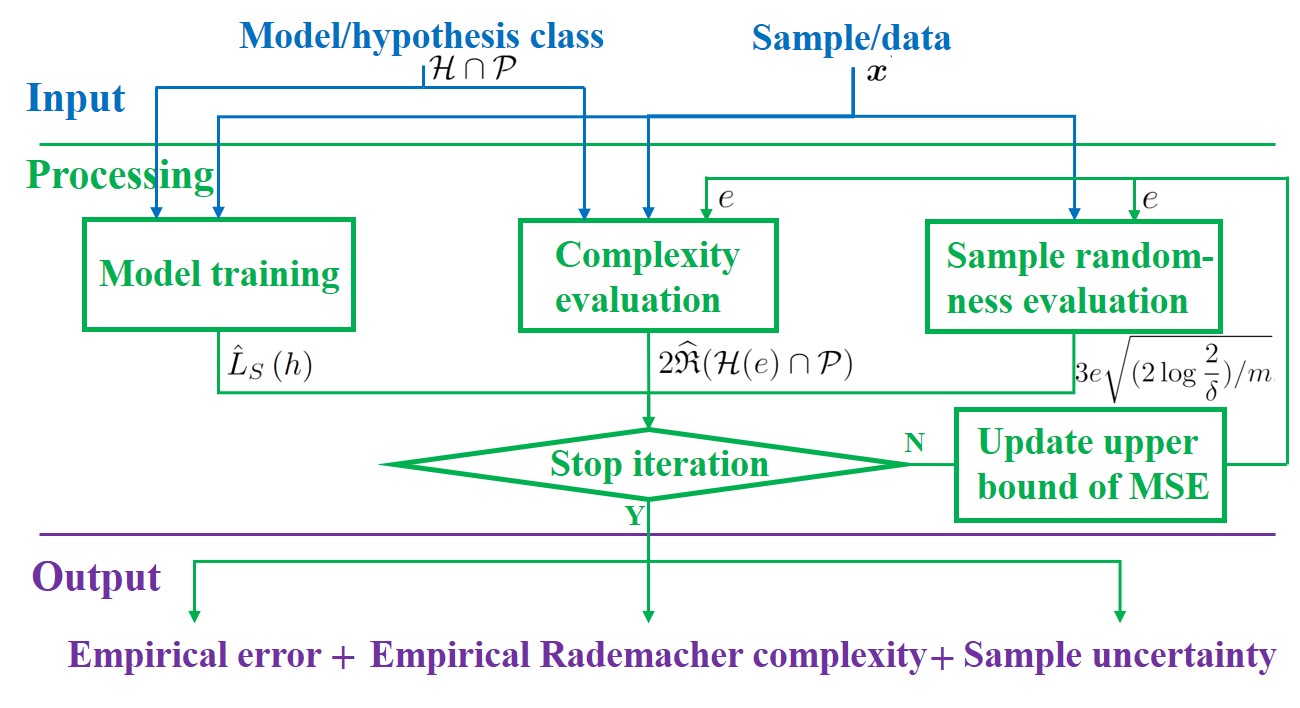}
	\caption{
	Flowchart of modified Rademacher complexity bound evaluation.
	}
	\label{fig_framework}
\end{figure}

\subsection{Computation of Empirical Rademacher Complexity}
\label{section_numerical_solution}
To calculate the empirical Rademacher complexity as in \textbf{Definition \ref{def_empirical_rademacher}}, we generate $n$ samples of the Rademacher variable vector $\bm{\sigma}_j \in \{-1,1\}^m$ each with $m$ elements. The empirical Rademacher complexity can thus be calculated using the generated samples as
% \begin{align}
%      \widehat{\mathfrak{R}}(\mathcal{H}(e)\cap \mathcal{P}) \approx \frac{1}{n}\sum_{j=1}^n \sup_{h\in \{\mathcal{H}(e)\cap \mathcal{P}\} }[\bm{\sigma}_j]^T \bm{l}(h,\bm{x}).
% \end{align}

% randomly generate $n$ vectors of Rademacher variables and formulate the empirical Rademacher complexity on the average case:
\begin{shrinkeq}{-0.4ex}
\begin{subequations}\label{eq_rademacher_average}
  \begin{equation}\label{eq_rademacher_average_a}
    \widehat{\mathfrak{R}}(\mathcal{H}(e)\cap \mathcal{P})
    \approx
    \frac{1}{n}
    \sum_{j=1}^{n}
    {
    Sup\left(\boldsymbol{\sigma_j},\bm{x},l\right)
    %   \sup _{h \in \mathcal{H}(e)\cap \mathcal{P}} \frac{1}{m} \sum_{i=1}^{m} \sigma_{ij} l\left(h,x_{i}\right)
    }
  \end{equation}
  \begin{equation}\label{eq_rademacher_average_b}
    Sup\left(\boldsymbol{\sigma_j},\bm{x},h\right)
    =
    \sup _{h \in \mathcal{H}(e)\cap \mathcal{P}} \frac{1}{m} \sum_{i=1}^{m} \sigma_{ij} l\left(h,x_{i}\right),
  \end{equation}
\end{subequations}
\end{shrinkeq}
where $\sigma_{ij}$ denotes the $i$th element of the $j$th vector of Rademacher variables $\boldsymbol{\sigma_j}$.
Afterwards, we cast the sup in \eqref{eq_rademacher_average_b} as a maximization problem:
% Afterwards, we transform the evaluation of $Sup\left(\boldsymbol{\sigma_j},\bm{x},h\right)$ as an optimization problem:
\begin{shrinkeq}{-0.4ex}
\begin{subequations}\label{eq_opt_initial}
  \begin{equation}\label{eq_opt_initiala}
    \underset{h}{\max}
    \sum_{i=1}^{m}
    \sigma_{ij}
    l(h,x_i)
  \end{equation}
  \begin{equation}\label{eq_opt_initialb}
    \sum_{i=1}^{m}
    \left[f(x_i)-h(x_i)\right]^2
    \leq me^2
  \end{equation}
  \begin{equation}\label{eq_opt_initialc}
    p(\bm{x},h)\leq0,
  \end{equation}
\end{subequations}
\end{shrinkeq}
where the maximum value of \eqref{eq_opt_initiala} is $(1/m)Sup\left(\boldsymbol{\sigma_j},\bm{x},h\right)$, \eqref{eq_opt_initialb} denotes the upper bound of MSE as assumed in~\eqref{eq_condition_MRC}, and~\eqref{eq_opt_initialc} represents the physical constraints.
The physical constraints may relate to the samples $\bm{x}$ or the parameters of the hypothesis $h$.
Specified formulations of~\eqref{eq_opt_initialc} will be demonstrated in Section~\ref{section_branch_flow} and Section~\ref{section_external_network}.

Then, we cast the absolute value formulations in \eqref{eq_opt_initial} as a differentiable problem, by introducing complementary auxiliary variables:
\begin{subequations}\label{eq_auxiliary}
  \begin{equation}
  \left|f(x_i)-h(x_i)\right|=d_{i}^{+}+d_{i}^{-},
  \end{equation}
  \begin{equation}
  f(x_i)-h(x_i)=d_{i}^{+}-d_{i}^{-}.
  \end{equation}
\end{subequations}
Applying the big M method, problem \eqref{eq_opt_initial} becomes:
\begin{shrinkeq}{-0.4ex}
\begin{subequations}\label{eq_opt_slow}
  \begin{equation}\label{eq_opt_slowa}
    \underset{h,d_{i}^{+},d_{i}^{-},u_{i}}{\max}
    \sum_{i=1}^{m}
    \sigma_{ij}
    \left(d_{i}^{+}+d_{i}^{-}\right)
  \end{equation}
  \begin{equation}\label{eq_opt_slowa}
    h(x_i)-f(x_i)=d_{i}^{+}-d_{i}^{-}
    \text{, }\forall i
  \end{equation}
  \begin{equation}\label{eq_opt_slowb}
    0\leq d_{i}^{+}\leq Mu_{i}
    \text{, }\forall i\text{ s.t. }\sigma_{ij}=+1
  \end{equation}
  \begin{equation}\label{eq_opt_slowc}
    0\leq d_{i}^{-}\leq M(1-u_{i})
    \text{, }\forall i\text{ s.t. }\sigma_{ij}=+1
  \end{equation}
  \begin{equation}\label{eq_opt_slowd}
    \sum_{i=1}^{m}
    \left(d_{i}^{+}+d_{i}^{-}\right)^2
    \leq me^2
  \end{equation}
  \begin{equation}\label{eq_opt_slowe}
    p(\bm{x},h)\leq0,
  \end{equation}
\end{subequations}
\end{shrinkeq}
where $M$ denote the big M value, $u_i$ denotes the $i$th 0-1 decision variable.
The big M related constraints \eqref{eq_opt_slowb} and \eqref{eq_opt_initialc} ensure that at least one of $d_i^+$ and $d_i^-$ should be zero.
This is a necessary condition of the transformation in \eqref{eq_auxiliary}.
We only need to add the above constraints when $\sigma_{ij}=+1$.
Because when $\sigma_{ij}=-1$, minimizing $d_{i}^{+}+d_{i}^{-}$ will guarantee no violation of \eqref{eq_opt_slowb} and \eqref{eq_opt_initialc}~\cite{boyd2004convex}.
Note that constraint \eqref{eq_opt_slowd} is non-convex.
We adopt the following theorem to convexify the problem.
\begin{proposition}\label{theo_MAE}
  Assume after training of $m$ samples, the MAE has an upper bound for $\forall S\sim D$:
  \begin{equation}\label{eq_condition_MAE}
    (1/m)\sum_{i=1}^{m}{\left| f\left( {{x}_{i}} \right)-h\left( {{x}_{i}} \right) \right|} \leq e.
  \end{equation}
Then, the empirical Rademacher complexity under assumption \eqref{eq_condition_MAE} ($\widehat{\mathfrak{R}}'(\mathcal{H}(e)\cap \mathcal{P})$) is the upper bound of the empirical Rademacher complexity under assumption \eqref{eq_condition_MRC} ($\widehat{\mathfrak{R}}(\mathcal{H}(e)\cap \mathcal{P})$):
  \begin{equation}\label{eq_theo_MAE}
    \widehat{\mathfrak{R}}(\mathcal{H}(e)\cap \mathcal{P})
    \leq
    \widehat{\mathfrak{R}}'(\mathcal{H}(e)\cap \mathcal{P}).
  \end{equation}
\end{proposition}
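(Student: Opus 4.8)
The plan is to reduce the claim to a set-inclusion statement about the feasible hypotheses, after which monotonicity of the supremum and of the expectation finishes the argument. First I would observe that for a fixed hypothesis $h$, the loss vector $\bm{l}(h)=\left(l(h,x_1),\dots,l(h,x_m)\right)^T$ is entrywise nonnegative (each entry is $|f(x_i)-h(x_i)|$), and that the two assumptions are exactly norm constraints on this vector. Assumption \eqref{eq_condition_MRC} says $\bm{l}(h)$ lies in an $\ell_2$-ball, $\|\bm{l}(h)\|_2 \le \sqrt{m}\,e$, while assumption \eqref{eq_condition_MAE} says $\bm{l}(h)$ lies in an $\ell_1$-ball, $\|\bm{l}(h)\|_1 \le m e$.

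The key step is then a single application of the Cauchy-Schwarz inequality: for any $\bm{l}\in\mathbb{R}^m$, $\|\bm{l}\|_1 = \sum_{i=1}^m |l_i|\cdot 1 \le \sqrt{\textstyle\sum_{i=1}^m l_i^2}\cdot\sqrt{m} = \sqrt{m}\,\|\bm{l}\|_2$. Consequently, any $h$ satisfying the MSE assumption \eqref{eq_condition_MRC} automatically satisfies the MAE assumption \eqref{eq_condition_MAE}, because $\|\bm{l}(h)\|_2 \le \sqrt{m}\,e$ forces $\|\bm{l}(h)\|_1 \le \sqrt{m}\cdot\sqrt{m}\,e = m e$. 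In other words, the set of hypotheses admitted under the MSE constraint is a subset of the set admitted under the MAE constraint, both intersected with the same $\mathcal{H}\cap\mathcal{P}$.

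From here the result follows mechanically. For every fixed realization of the Rademacher vector $\bm{\sigma}$, the supremum of $\frac{1}{m}\sum_{i=1}^m \sigma_i\, l(h,x_i)$ taken over the smaller (MSE-feasible) hypothesis set cannot exceed the supremum over the larger (MAE-feasible) set. Taking the expectation over $\bm{\sigma}$ preserves this pointwise inequality, which is precisely the assertion \eqref{eq_theo_MAE}, namely $\widehat{\mathfrak{R}}(\mathcal{H}(e)\cap\mathcal{P}) \le \widehat{\mathfrak{R}}'(\mathcal{H}(e)\cap\mathcal{P})$.

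I do not expect a genuine obstacle; the entire content is the Cauchy-Schwarz containment of the $\ell_2$-ball inside the $\ell_1$-ball at the matched radii. The one point meriting care is verifying that the radii line up correctly, i.e.\ that the factors $\sqrt{m}$ and $m$ arising from the normalizations in \eqref{eq_condition_MRC} and \eqref{eq_condition_MAE} are consistent, and confirming that the supremum is controlled through the loss vector so that feasibility is inherited set-wise. Once the inclusion is in place, no structural properties of $\mathcal{H}$ or $\mathcal{P}$ are needed and monotonicity does the rest.
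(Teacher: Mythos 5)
Your proof is correct and essentially identical to the paper's: both establish that the MSE constraint \eqref{eq_condition_MRC} implies the MAE constraint \eqref{eq_condition_MAE} (your Cauchy--Schwarz step $\|\bm{l}\|_1 \le \sqrt{m}\,\|\bm{l}\|_2$ is the same inequality the paper invokes as the general means inequality, AM $\le$ QM), and then conclude by monotonicity of the supremum over the nested feasible sets. You merely make explicit the final step of taking expectation over $\bm{\sigma}$, which the paper leaves implicit.
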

\begin{proof}
  Denote $\left| f\left( {{x}_{i}} \right)-h\left( {{x}_{i}} \right) \right|=d_i$.
  $\forall h$ satisfies \eqref{eq_condition_MRC}, we have:
  \begin{equation}
    \frac{1}{m}
    \sum_{i=1}^{m}d_i
    \leq
    \left(
      \frac{1}{m}
      \sum_{i=1}^{m}{d_i}^2
    \right)
    ^
    {1/2}
    \leq
    e,
  \end{equation}
  where the first inequality holds by the general means inequality~\cite{bullen2013means}, the second inequality is from \eqref{eq_condition_MRC}.
  Thus, any hypothesis $h$ that satisfies \eqref{eq_condition_MRC} also satisfies \eqref{eq_condition_MAE}, which proves \eqref{eq_theo_MAE}.
\end{proof}
From \textbf{Proposition \ref{theo_MAE}}, we evaluate $\widehat{\mathfrak{R}}'(\mathcal{H}(e)\cap \mathcal{P})$ by substituting constraint \eqref{eq_opt_slowd} to:
\begin{equation}\label{eq_subs_MAE}
  \sum_{i=1}^{m}
    \left(d_{i}^{+}+d_{i}^{-}\right)
    \leq me.
\end{equation}
In many applications, $h\left(x_i\right)$ and $p(\bm{x},h)$ in \eqref{eq_opt_slow} are linear or quadratic corresponding to the parameters of $h$, thus the numerical evaluation of $\widehat{\mathfrak{R}}'(\mathcal{H}(e)\cap \mathcal{P})$ turns into a mixed-integer linear programming (MILP) problem or a mixed-integer quadratically-constrained programming (MIQCP) problem, which can be solved by many commercial solvers such as Gurobi and Cplex.

\subsection{Other Machine Learning Problems}
\label{section_other}
Our framework is generally applicable to supervised learning problems. 
That is, we can easily extend the MRC framework to classification problems, where all the process is the same with regression problems.
Recall that the MRC is designed to address the unbounded loss value of regression problems.
Thus, we can even more easily handle classification problems because they have bounded outputs.
Consider a binary classification problem with output set $\{0, 1\}$.
Then we can set the initial $e=1$ and start the iteration in Section~\ref{section_iteration}.
Still, our framework is not applicable for unsupervised learning or reinforcement learning problems.
Future research will focus on provable guarantees on those problems, especially with power system knowledge embedding.

%-----------------------------------------
%-----------------------------------------
%-----------------------------------------
\section{Branch Flow Linearization}
\label{section_branch_flow}
\subsection{Problem Formulation}
The branch flow linearization problem searches for an optimal mapping function between the branch flow and the voltage of the connected buses from historical operational data~\cite{liu2018data}.
The linearized branch flow function has better performance in terms of computational speed and convergence guarantee in power flow applications, such as optimal power flow computation~\cite{hu2019ensemble}, low observability state estimation~\cite{donti2019matrix}, and distributed voltage control problems~\cite{magnusson2020distributed}.
For simplicity, we only consider branch flow linearization in this paper, which formulates the basis process of linearizing power flow equations~\cite{li2018data}.
The well-known branch flow equations are as follows:
\begin{shrinkeq}{-0.3ex}
\begin{subequations}\label{eq_branch_flow}
  \begin{equation}
    P_{ij} = 
    g_{i j}\left(v_{i}^{2}-v_{i} v_{j} \cos \theta_{i j}\right)
    -b_{i j} v_{i} v_{j} \sin \theta_{i j}
  \end{equation}
  \begin{equation}
    Q_{ij} = 
    -b_{i j}\left(v_{i}^{2}-v_{i} v_{j} \cos \theta_{i j}\right)
    -g_{i j} v_{i} v_{j} \sin \theta_{i j},
  \end{equation}
\end{subequations}
\end{shrinkeq}
where subscript $i/j$ denote the bus number,
$v_i/v_j$ denote the voltage magnitudes of bus $i/j$,
and $g_{ij}/b_{ij}$, $P_{ij}/Q_{ij}$, and $\theta_{ij}$ denote conductance/susceptance, active/reactive power flow, and voltage angle of branch $(i,j)$, respectively.
In this paper, we use the following linearized branch flow formulations:
\begin{equation}\label{eq_linear_bf}
  h_{ij}^{bf}(v,\theta)
  = \alpha_1^{bf}v_i^2 + \alpha_2^{bf}v_j^2 + \alpha_3^{bf}\theta_i + \alpha_4^{bf}\theta_j + \alpha_5^{bf},
\end{equation}
where $h_{ij}^{bf}(v,\theta)$ denotes the linearized formulation of active or reactive branch flow $P_{ij}$ or $Q_{ij}$, and $\alpha_1^{bf}\sim\alpha_5^{bf}$ denote the parameters of $h_{ij}^{bf}(v,\theta)$.
In \eqref{eq_linear_bf}, branch flow is a linearized form with respect to $v^2$ and $\theta$, which is a recommended representation in ~\cite{yang2018general}.
The parameters of $\alpha_1^{bf}\sim\alpha_5^{bf}$ are computed from historical operational data by the ordinary least squares (OLS) regression algorithm.
Parameters of different systems and different branches are different, to best fit the system and its operational characteristics.

\subsection{Physical Knowledge}

From \eqref{eq_branch_flow} and \eqref{eq_linear_bf}, we can observe that the voltage angle only appears in the form of $\theta_{ij}$.
In other words, the coefficient of $\theta_i$ is the negative of the coefficient of $\theta_j$.
Thus, we adopt the following constraints as the physical constraints of the problem:
\begin{equation}\label{eq_bf_physical1}
  -\Delta^{bf}
  \leq
  \alpha_3^{bf}+\alpha_4^{bf}
  \leq
  \Delta^{bf},
\end{equation}
where $\Delta^{bf}$ is a non-negative small value.
%from \eqref{eq_bf_deduction_2} and \eqref{eq_bf_deduction_3}, 
Furthermore, we can obtain the expressions of $\alpha_1^{bf}\sim\alpha_5^{bf}$ by first order expansion, and thus set the maximum and minimum bounds $\overline{B_{i}}^{bf}$ and $\underline{B_{i}}^{bf}$ considering the boundary operational conditions~\cite{yu2017optimal}:
\begin{equation}\label{eq_bf_physical2}
  \underline{B_{i}}^{bf}
  \leq
  \alpha_i^{bf}
  \leq
  \overline{B_{i}}^{bf}
  \text{, }i=1\sim5.
\end{equation}
Finally, set \eqref{eq_linear_bf} as the hypothesis $h$ in \eqref{eq_opt_slow}, and set \eqref{eq_bf_physical1} or \eqref{eq_bf_physical2} as the physical constraints in \eqref{eq_opt_slowe}.
The evaluation of $\widehat{\mathfrak{R}}'(\mathcal{H}(e)\cap \mathcal{P})$ then becomes an MILP problem.

\subsection{Numerical Results}
We generate the data using Monte Carlo simulation, with the aid of MATPOWER 6.0 ~\cite{zimmerman2010matpower}.
Then, all the data are normalized to the interval [0, 1].
The data generation strategy of power system operational data is the same as in~\cite{liu2018data}.
The optimization problem is solved by Gurobi 8.1 with Python interface.
We use the IEEE 118-bus system in this case study.
Set the non-negative small value $\Delta^{bf}=10^{-2}$, the number of Rademacher variables $n=10$, the maximum iteration steps $I=10$, and the PAC learning probability $\delta=0.05$. % , and the initial iteration value $e=0.5$ in \eqref{eq_rademacher_average_a} to

% \begin{figure}[hbt!]
% 	\centering
% 		\includegraphics[width=.9\linewidth]{}
% 	\caption{The mean absolute errors of active and reactive branch flow of different branches in IEEE 118-bus system.}
% 	\label{fig_total}
% \end{figure}

% Firstly, we generate 500 snapshots of data, with each snapshot containing the active and reactive branch flow of all the branches, and the voltage magnitudes and angles of all the buses of the same time.
% Then, all the data are normalized to the interval [0, 1].
% We use the least squares method to fit the generated data, under the formulation of \eqref{eq_linear_bf}.
% The accuracy are tested on 500 newly generated snapshots.
% The MAE of active and reactive branch flow of all the branches are depicted in \figurename\ref{fig_total}.
% The MAE of all the branches are below $10^{-2}$ p.u, with the error of reactive branch flow larger than the error of active branch flow.

% \begin{figure}[hbt!]
% 	\centering
% 		\includegraphics[width=.9\linewidth]{}
% 	\caption{
% 	The generalization error bounds of reactive branch flow of Branch \#96.
% 	The bounds include the total generalization error bound (bound), the empirical error (empirical), the complexity of the data-driven model (model), the uncertainty of samples (sample), and the testing error (test) under different amount of training data.
% 	}
% 	\label{fig_QF}
% \end{figure}

We analyze the generalization error of the reactive flow in Branch \#96, which has the largest training error in all branches.
% The total generalization error bound, the complexity of the data-driven model, and the uncertainty of samples decrease with the training data amount increases.
% At last, under 300 snapshots of training data, the total generalization error is nearly four times as much as the empirical error.
% Although the bound is much larger than the empirical error, it is still a tight result for a theoretical (rather than empirical) error bound.
% The tightness of the bound is comparable with some state-of-art researches~\cite{germain2016pac,reeb2018learning}.
% Furthermore, it is proved that the limit of the Rademacher complexity bound is the empirical error with training size $m\rightarrow+\infty$~\cite{mohri2018foundations}.
% From \figurename\ref{fig_QF}, the complexity of model is the majority of generalization bounds, especially when training data size is small.
% This indicates the physical knowledge has great potential to tighten the generalization bound.
% Then, we analyze the effects of physical knowledge.
The generalization error bounds of three cases under different amount of training data are compared: 1) \textbf{NonPhys}: Without any physical knowledge; 2) \textbf{Angle}: Consider the voltage angle constraint \eqref{eq_bf_physical1}; 3) \textbf{Box}: Consider the maximum and minimum parameter bound constraints \eqref{eq_bf_physical2}. \figurename\ref{fig_physical_bf} shows that the generalization error bound of all the three cases decrease as the training data amount increases.

\begin{figure}[hbt!]
	\centering
		\includegraphics[width=.85\linewidth]{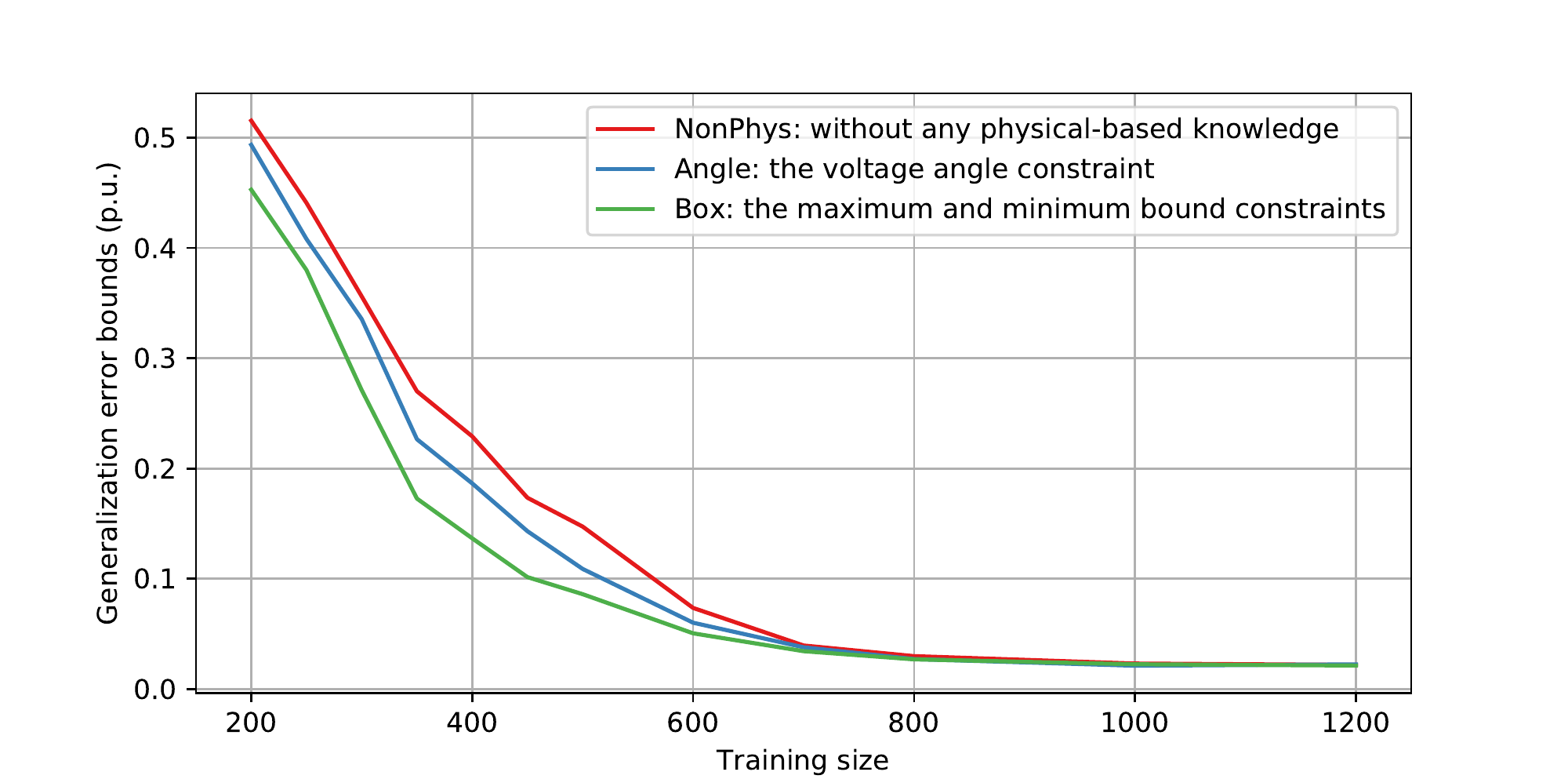}
	\caption{
	Comparisons of different physical knowledge in the case of reactive branch flow of Branch \#96.
	The total generalization bounds of methods \textbf{NonPhys}, \textbf{Angle}, and \textbf{Box} under different amounts of training data are compared. The training size ranges from 200 to 1200.
	}
	\label{fig_physical_bf}
\end{figure}

The addition of physical knowledge significantly decreases the generalization error bounds, especially when the training size is below 700.
Furthermore, \textbf{Box} has a more significant effect of reducing the generalization bounds than \textbf{Angle}.
% To obtain a certain level of generalization bound, the amount training data are different among different methods. Hence, the effects of physical knowledge can be measured by the difference of required training data to obtain a certain level of generalization bound.
As shown in \figurename\ref{fig_physical_bf}, to obtain the generalization bound of 0.1 p.u., the effect of physical knowledge \textbf{Angle} is equivalent to nearly 50 snapshots of training data, while the effect of \textbf{Box} is equivalent to nearly 100 snapshots.
% Note that this kind of measurement is for an intuitive understanding on the effects of physical knowledge, rather than an accurate measurement.
% The difference of generalization bounds under same amount of training data is a more accurate measurement.
In practice, the above results can be used to determine the size of training samples and to design the implementation of physical knowledge.
\figurename\ref{fig_physical_bf} suggests that the training size over 800 has less effect to enhance the generalization behavior.
From the proposed theory, we suggest 800 training sample size is enough for this problem.
In addition, we can also conclude that the physical knowledge \textbf{Box} is far more effective than \textbf{Angle}.
% In practice, the above results can be used to design the implementation of physical knowledge.
% \figurename\ref{fig_physical_bf} suggests that \textbf{Box} is far more effective than \textbf{Angle}, and that the physical knowledge has more influence for small training datasets.

\begin{figure}[hbt!]
	\centering
		\includegraphics[width=.8\linewidth]{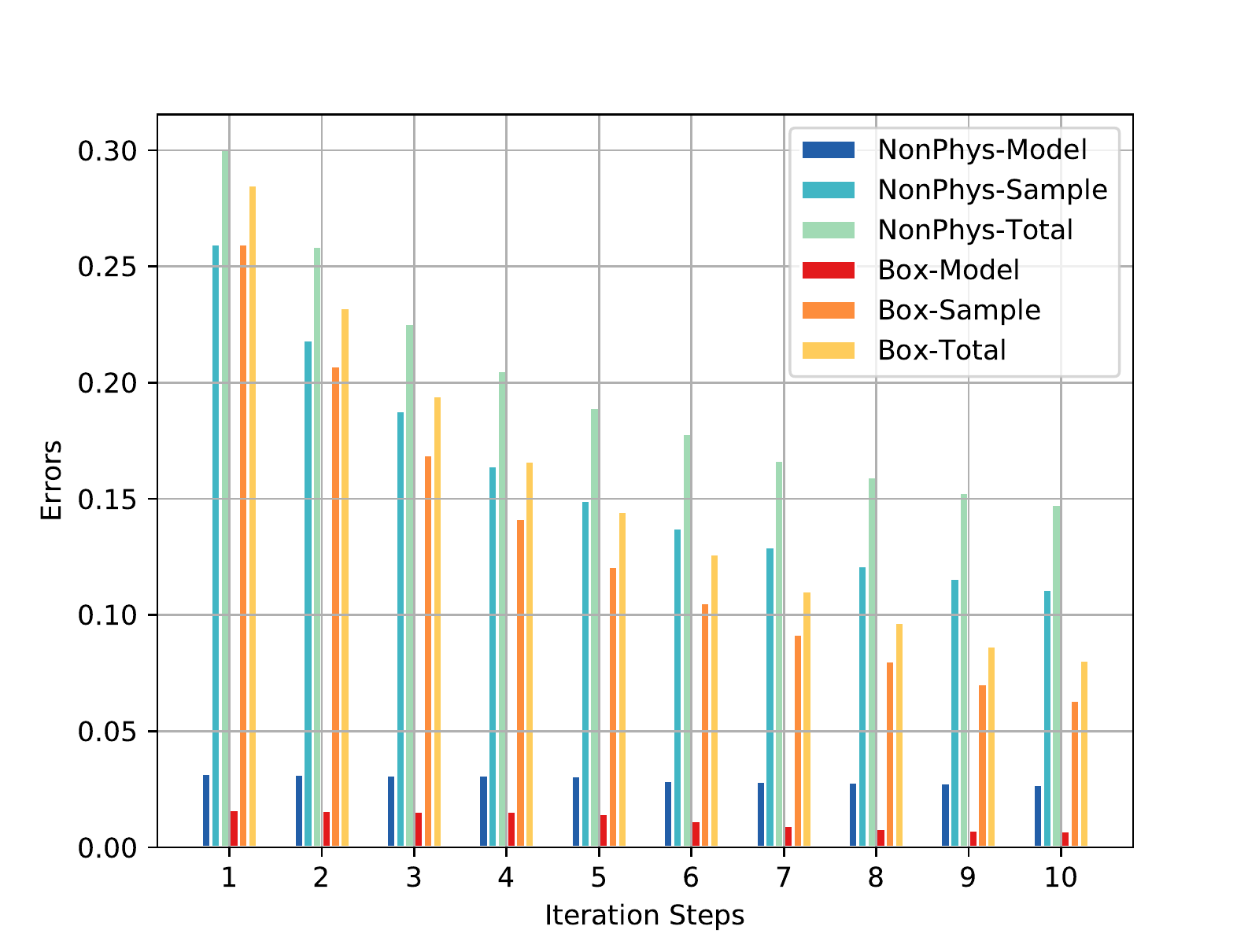}
	\caption{
	Illustration of different parts of the generalization error bounds in each iteration step.
	The bounds include the complexity of the data-driven model (Model), the randomness of sample distribution (Sample), and the total generalization error bound (Total).
	The training size is 500.
	Two cases are compared: \textbf{NonPhys} and \textbf{Box}.
	}
	\label{fig_iteration_bf}
\end{figure}

We then demonstrate different parts of the generalization error bounds in each iteration step in \figurename~\ref{fig_iteration_bf}.
The complexity of the data-driven model, the randomness of the sample distribution, and the total generalization error bound decrease through iterations.
The total generalization bound difference of \textbf{NonPhys} and \textbf{Box} increases through iterations.
At iteration step 1, the difference of the total generalization bounds is small because the initial $e$ is the same so that the randomness of the sample distribution is the same.
Then, the difference of the total generalization bounds results in the difference of $e$ in the next iteration step.
Therefore, the small difference accumulates through iterations and results in a significant difference in~\figurename~\ref{fig_iteration_bf}.

\begin{figure}[hbt!]
	\centering
		\includegraphics[width=1\linewidth]{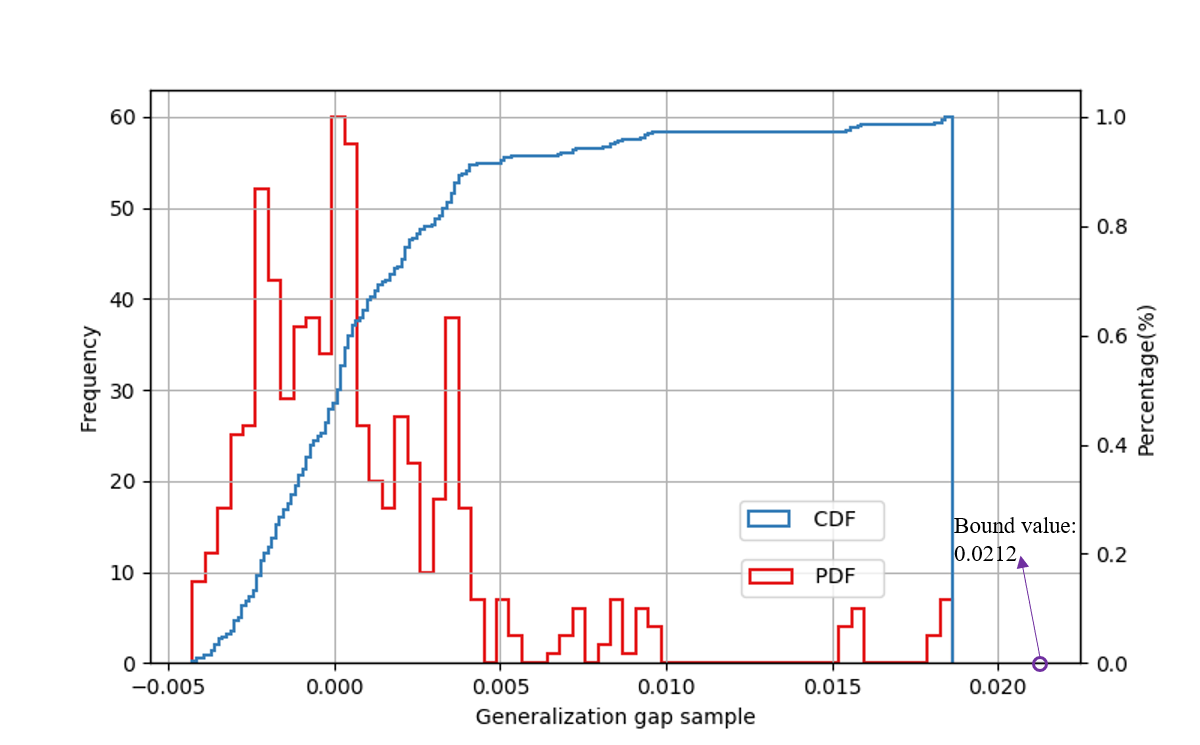}
	\caption{
	The probability density function (PDF) and the cumulative distribution function (CDF) of the generalization gap $\hat{L}_{\bm{x}_{test}}(h)-\hat{L}_{\bm{x}_{train}}(h)$.
	The PDF is shown by frequency (left vertical axis) while the CDF is shown by percentage (right vertical axis).
	20 training datasets are generated with each dataset contains 800 samples. 
	40 testing datasets are generated to evaluate the generalization performance. 
	The bound value computed by~\eqref{eq_empir_MRC_re} is also marked in the figure.
	}
	\label{fig_cdf_pdf_bf}
\end{figure}
Then, we demonstrate the tightness of the bound using some testing scenarios.
Recall that our bound in \textbf{Theorem~\ref{theo_empirical_MRC}} is valid for all possible training and testing data drawn from an underlying distribution $\mathcal{D}$.
To test our bound under different training and testing scenarios, we reformulate our bound as:
\begin{equation}\label{eq_empir_MRC_re}
L\left(h\right)-\hat{L}_{\bm{x}}(h) \leq 2\widehat{\mathfrak{R}}(\mathcal{H}(e))+3e\sqrt{(2\log\frac{2}{\delta })/m},
\end{equation}
where we bound the generalization gap $L\left(h\right)-\hat{L}_{\bm{x}}(h)$ instead.
We first use the original sample $\bm{x}$ to evaluate the bound of the generalization gap.
Then we generate multiple training and testing datasets $\{\bm{x}_{train}^{1}, \bm{x}_{train}^{2}, ...\}$ and $\{\bm{x}_{test}^{1}, \bm{x}_{test}^{2}, ...\}$.
The difference of testing error and training error is calculated by 
$\hat{L}_{\bm{x}_{test}}(h)-\hat{L}_{\bm{x}_{train}}(h)$, where $\bm{x}_{test}$ is one of the testing datasets, and $\bm{x}_{train}$ is one of the training datasets.%, and $h$ is trained by the same training dataset
To test the performance of bounding large possible generalization gaps, we show the distribution of the generalization gap samples.
The probability density function (PDF), the cumulative distribution function (CDF), and the bound value are shown in \figurename~\ref{fig_cdf_pdf_bf}.
The generalization gap follows a fat-tailed-distribution.
Large errors may occur with little possibility, which is the reason to evaluate the bound rather than the mean value of the generalization gap.
From \figurename~\ref{fig_cdf_pdf_bf}, our result can well bound the generalization gap under the uncertainty of training and testing scenarios.

% The difference of the total generalization bounds only comes from the complexity of the data-driven model at the first step, which makes a small share.
% This small difference will accumulate through iterations and results in a significant difference in \figurename~\ref{fig_iteration_bf}.
%-----------------------------------------
%-----------------------------------------
%-----------------------------------------

\section{External Network Equivalent}
\label{section_external_network}
\subsection{Problem Formulation}
In interconnected power systems, the power flow model of external networks can be simplified through an equivalent network representation.
Such equivalence is essential for interconnected systems where the information is not shared, and can also reduce the computational complexity of system optimization problems.
The equivalence model can be used in a wide range of applications such as contingency analysis, optimal power flow dispatch, and static voltage stability analysis, etc.
% Most conventional methods introduce linearization in the equivalence process and loss its accuracy~\cite{zhu2017nonlinear}. Recently, some researchers use border PMU measurement data and some physical knowledge to improve the equivalence accuracy.
In this section, we first use the model in~\cite{yu2017optimal} as shown in \figurename\ref{fig_external_model}, where the border PMU data is used to estimate the parameters of the model.
\begin{figure}[hbt!]
	\centering
		\includegraphics[width=.65\linewidth]{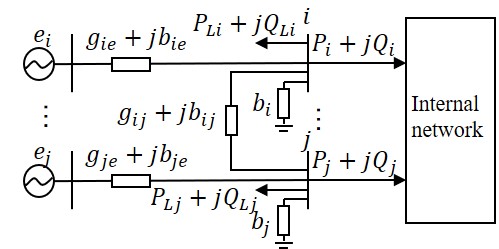}
	\caption{The external network equivalence model in ~\cite{yu2017optimal}.}
	\label{fig_external_model}
\end{figure}

The model can be formulated as follows:
\begin{shrinkeq}{-0.5ex}
\begin{subequations}\label{eq_EX_model}
  \begin{equation}\label{eq_EX_model_a}
    P_{i}=-P_{Li}-\sum_{j\neq i}{P_{ij}}-P_{ie}
  \end{equation}
  \begin{equation}\label{eq_EX_model_b}
    Q_{i}=-Q_{Li}-\sum_{j\neq i}{Q_{ij}}-Q_{ie}+b_{i}v_i^2,
  \end{equation}
\end{subequations}
\end{shrinkeq}
where $P_i/Q_i$, $P_{Li}/Q_{Li}$, $P_{ij}/Q_{ij}$, and $P_{ie}/Q_{ie}$ denote the active/reactive branch flow of the border bus $i$ to the internal network, the equivalent active/reactive load of border bus $i$, the active/reactive branch flow from bus $i$ to $j$, and the active/reactive branch flow from bus $e_i$ to $i$, respectively.
By substituting the branch flow equations \eqref{eq_branch_flow} into \eqref{eq_EX_model}, the active/reactive branch flow $P_i/Q_i$ can be formulated as:
\begin{equation}\label{eq_EX_lr}
  \begin{aligned}
  h_i^{ex}\!(v,\theta)&\!=\!\!
  \sum_{j\neq i}{\!\left[
    \alpha_{j1}^{ex}(
      v_i^2\!-\!v_iv_j\cos\theta_{ij}
    )
    \!+\!
    \alpha_{j2}^{ex}v_iv_j\sin\theta_{ij}
    \right]}&\\
    &+
    \alpha_3^{ex}v_i^2+ \alpha_4^{ex}v_i+\alpha_5^{ex}.&
  \end{aligned}
\end{equation}
Similar to the case of branch flow linearization, the parameters $\bm{\alpha^{ex}}=\left[\alpha_{j1}^{ex},\alpha_{j2}^{ex},\alpha_3^{ex},\alpha_4^{ex},\alpha_5^{ex}\right]$ can be obtained from historical operational data by OLS algorithm.

The values of $\bm{\alpha^{ex}}$ are related to the power system and its operational characteristics.
Thus, one can implement physical knowledge to constrain the value of $\bm{\alpha^{ex}}$, see ~\cite{yu2017optimal} for details.
\begin{equation}\label{eq_physical_ex}
  \mathbf{\underline{B}^{ex}}
  \leq
  \bm{\alpha^{ex}}
  \leq
  \mathbf{\overline{B}^{ex}}.
\end{equation}

\begin{figure}[hbt!]
	\centering
		\includegraphics[width=.65\linewidth]{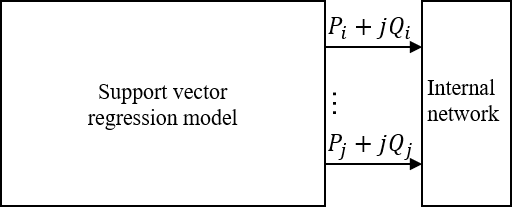}
	\caption{The external network equivalence model using support vector regression.}
	\label{fig_external_model_svr}
\end{figure}

The above data-driven models are linear regression models.
To compare cases under different data-driven models, we also implement the support vector regression (SVR) method to learn the external network model.
SVR has shown high accuracy in learning power flow models with an appropriate setting of inputs~\cite{yu2017robust}.
The SVR model does not have any assumed physical circuits for the external network, as shown in \figurename~\ref{fig_external_model_svr}.
Hence, the aforementioned physical knowledge~\eqref{eq_physical_ex} can only be applied in linear regression method rather than SVR method.
For border bus $i$, we set the input vector as $\bm{x_i}=\left[v_i,v_j,\sin\theta_{ij},\cos\theta_{ij}\right]$, where $j$ represents all other border buses.
Given $m$ snapshots $\bm{x_i^1} \ldots \bm{x_i^m}$, the SVR model can learn the active/reactive branch flow from the following formulation:
\begin{equation}\label{eq_SVR}
  h_i^{svr}(\bm{x_i})
  =\sum_{i=k}^{m}\left(\alpha_{i}^{+}-\alpha_{i}^{-}\right)
  \mathcal{K}\left( \bm{x_i^k}, \bm{x_i}\right)
  +b,
\end{equation}
where $\alpha_{i}^{+}/\alpha_{i}^{-}$, $b$, and $\mathcal{K}(\cdot)$ are the dual variables in the SVR optimization process, the constant term, and the kernel function, respectively.
Note that for simplicity, we only introduce the dual form of the SVR problem.
The non-linearity of the SVR comes from the kernel function $\mathcal{K}(\cdot)$.
Considering the formulation of power flow equations, the second order polynomial kernel has sufficient richness to learn the power flow model:
\begin{equation}\label{eq_kernel}
  \mathcal{K}\left( \bm{x_i^k}, \bm{x_i}\right)
  =\left(
    \bm{x_i^k}\bm{x_i^T}+c
    \right)^2,
\end{equation}
where $c$ is a constant term.
The parameters $\alpha_{i}^{+}/\alpha_{i}^{-}$ have two characteristics:
1) The values are constrained considering the optimization problem of SVR:
\begin{equation}\label{eq_SVR_C}
  0
  \leq
  \alpha_{i}^{+}
  \leq
  C
  \text{ }
  0
  \leq
  \alpha_{i}^{-}
  \leq
  C,
\end{equation}
where the constant term $C$ denotes the weight of loss in the primal optimization problem.
2) $\alpha_{i}^{+}/\alpha_{i}^{-}$ are sparse. 
$\alpha_{i}^{+}$ is non-zero only when the difference of the real value and the prediction value are larger than a threshold $\epsilon^{svr}$: $y_i-h_i^{svr}(\bm{x_i})>\epsilon^{svr}$.
Similarly, $\alpha_{i}^{+}$ is non-zero only when $h_i^{svr}(\bm{x_i})-y_i>\epsilon^{svr}$.
We adopt the big M method to satisfy the aforementioned two characteristics:
\begin{subequations}\label{eq_SVR_constrain}
  \begin{equation}\label{eq_SVR_constrain_a}
    M(u_i^+-1)\leq d_i^+-\epsilon^{svr}\leq Mu_i^+
    \text{  }
    0\leq \alpha_{i}^{+} \leq Cu_i^+
  \end{equation}
  \begin{equation}\label{eq_SVR_constrain_b}
    M(u_i^--1)\leq d_i^--\epsilon^{svr}\leq Mu_i^-
    \text{  }
    0\leq \alpha_{i}^{-} \leq Cu_i^-.
  \end{equation}
\end{subequations}
% \subsection{Upper Bounds of Mean Squares Error}
% In this Section, we obtain the upper bounds of MSE $e^2$ by k-fold cross-validation~\cite{mohri2018foundations}.
% For the robustness of the MSE bound $e^2$, we select the maximum MSE of the k validation errors, and multiply an enlarge factor $l$ as the bound of MSE.
\subsection{Experimental Results}
The experimental setup is the same as in Section~\ref{section_branch_flow}.
Hyper parameters only in this Section are set as: $C=0.2$, $\epsilon^{svr}=0.01$, and $l=2$.
We use the IEEE 39-bus system in this case study, with external buses: \#1-\#2, \#25-\#30, and \#37-\#39; border buses: \#3, \#9, and \#17; and internal buses: \#4-\#8, \#10-\#16, \#18-\#24, and \#31-\#36.
We generate 500 snapshots of training data and test the accuracy on 500 newly generated snapshots using the least squares method.
The errors of different methods on different border buses are listed in Table \ref{table_ex_total}.

\begin{table}[ht]
	\centering
	\renewcommand{\arraystretch}{1.1}
	\caption{The Mean Absolute Errors of Different Border Buses.}
	\label{table_ex_total}
		\begin{tabular}{@{}lllllll@{}}
      \toprule
         & $P_i$, \#3 & $P_i$, \#9 & $P_i$, \#17 & $Q_i$, \#3 & $Q_i$, \#9 & $Q_i$, \#17 \\ \midrule
      Errors & 0.0408           & 0.0283          & \textbf{0.0534}           & 0.0047           & 0.0352           & 0.0070           \\
    %   LRBox & 0.0397           & 0.0221          & \textbf{0.0518}           & 0.0035           & 0.0352           & 0.0052            \\
    %   SVR & 0.0406           & 0.0240          & \textbf{0.0520}           & 0.0087           & 0.0400           & 0.0071           \\ 
      \bottomrule
      \end{tabular}
	% }
\end{table}

As shown in Table \ref{table_ex_total}, the active branch flow of border bus \#17 has the largest testing error.
Hence, we evaluate the generalization bound of the active branch flow of border bus \#17.

Three different methods are compared:
1) \textbf{LR}: The linear regression model without any physical knowledge. Set \eqref{eq_EX_lr} as the hypothesis in \eqref{eq_opt_slow};
2) \textbf{LRBox}: The linear regression model with physical knowledge. Set \eqref{eq_EX_lr} as the hypothesis in \eqref{eq_opt_slow}, with the maximum and minimum parameter bound constraints \eqref{eq_physical_ex} as the physical constraints in \eqref{eq_opt_slowe};
3) \textbf{SVR}: The SVR model. Set \eqref{eq_SVR} as the hypothesis in \eqref{eq_opt_slow}, with constraints \eqref{eq_SVR_constrain} added to the problem.

\begin{figure}[hbt!]
	\centering
		\includegraphics[width=.85\linewidth]{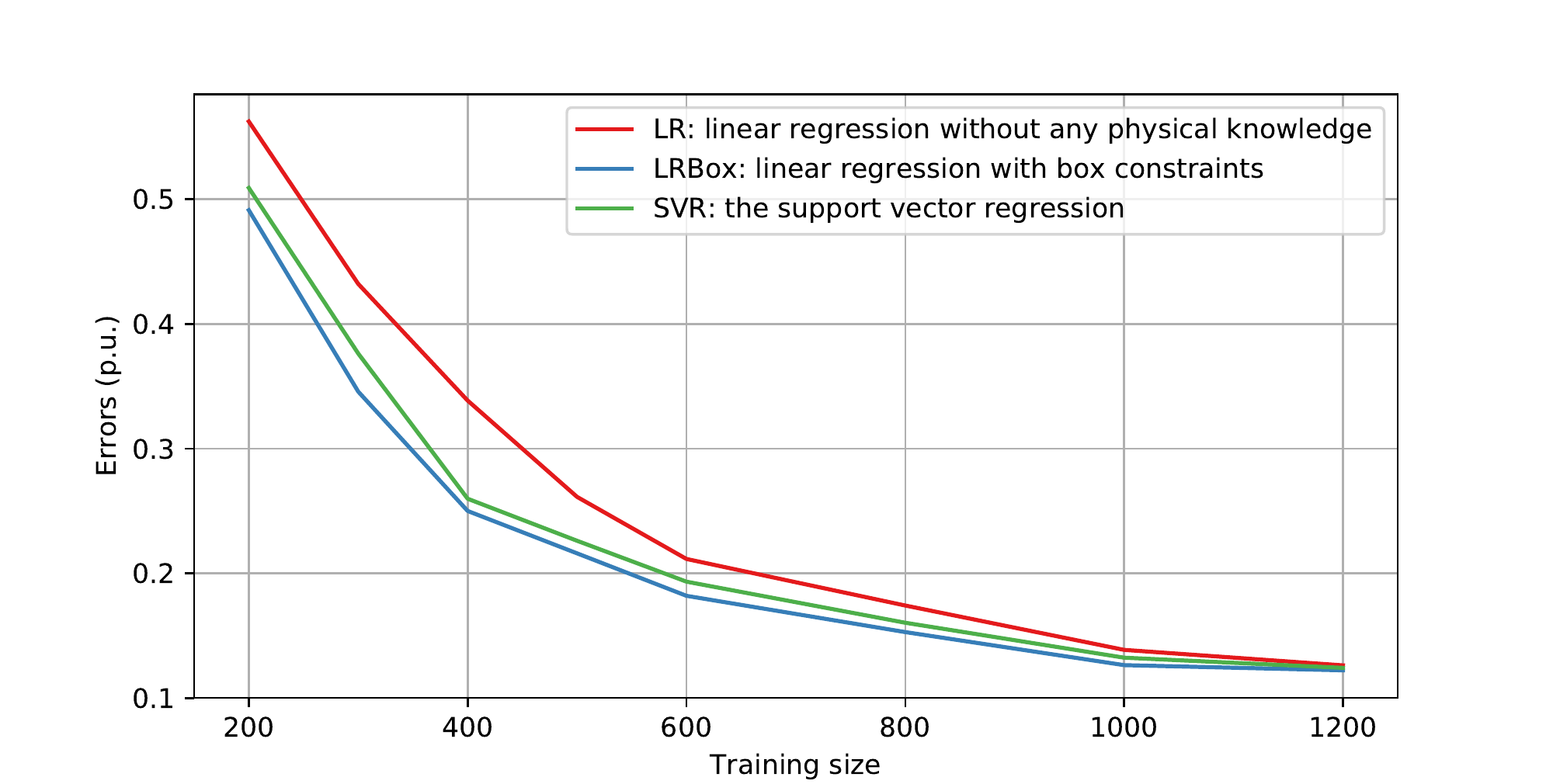}
	\caption{
	Comparisons of different models and physical knowledge in the case of active branch flow of Branch \#17.
	The total generalization bounds of \textbf{LR}, \textbf{LRBox}, and \textbf{SVR} under different amount of training data are compared.
	}
	\label{fig_physical_ex}
\end{figure}

We bound generalization errors of methods \textbf{LR}, \textbf{LRBox}, and \textbf{SVR} under different amount of training data, as shown in \figurename\ref{fig_physical_ex}.
An impressive result is that the generalization bound of the linear regression model \textbf{LR} is larger than that of the nonlinear \textbf{SVR} model.
It is because without the physical knowledge, the parameters of \textbf{LR} are not constrained.
In contrast, the parameters of \textbf{SVR} are constrained by \eqref{eq_SVR_constrain}.
From \textbf{Definition~\ref{def_empirical_rademacher}}, model \textbf{LR} has more richness to fit random noise than \textbf{SVR}, and thus has larger Rademacher complexity.
% This result can also be verified by the testing results from Table \ref{table_ex_total}, where the testing errors of \textbf{LR} are larger than that of \textbf{SVR} in most of the cases.
When the physical knowledge is added to the linear regression model, however, the generalization bound is significantly reduced.
The bound of \textbf{LRBox} is the lowest among all the methods.
We can conclude some suggestions that can help the implementation of the data-driven methods in practice.
The physical knowledge has significant effect of improving the generalization performance when training data size $m\leq 1000$.
Besides, it is suggested that when physical knowledge of linear regression method is not available (e.g., due to information barriers), \textbf{SVR} has better generalization performance; and when physical knowledge of linear regression method is available, \textbf{LRBox} has better generalization performance.

\begin{figure}[hbt!]
	\centering
		\includegraphics[width=1\linewidth]{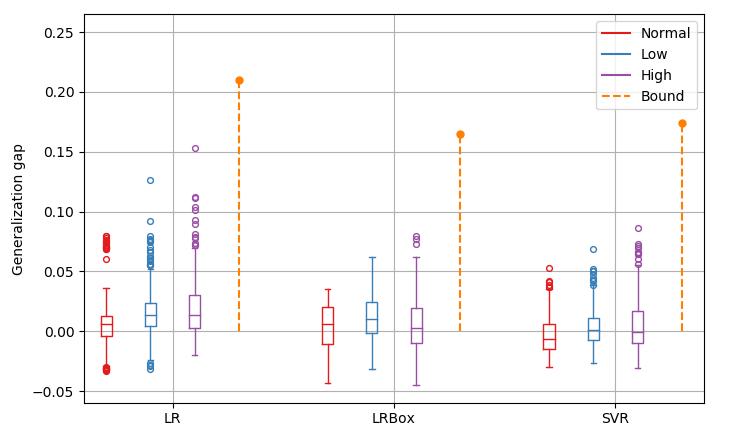}
	\caption{
	The boxplot of generalization gap of method \textbf{LR}, \textbf{LRBox}, and \textbf{SVR}. 
	All the models are trained with dataset \textbf{Normal} and are tested with dataset \textbf{Normal}, \textbf{Low}, and \textbf{High}.
	20 training datasets are generated with each dataset contains 500 samples.
	20 testing datasets are generated to evaluate the generalization performance.
	The bound value computed by~\eqref{eq_empir_MRC_re} is shown in the dashed line.
	}
	\label{fig_testing_ex}
\end{figure}

Then, we use some generated testing scenarios to evaluate the bound under different models and different data distributions.
In details, we demonstrate the distribution of generalization gap $L\left(h\right)-\hat{L}_{\bm{x}}(h)$ for model \textbf{LR}, \textbf{LRBox}, and \textbf{SVR}.
We also show the result where the training datasets and the testing datasets are from different distributions.
Three types of datasets are generated: 1) \textbf{Low}: Data are generated under the load range of [0.85-0.95]; 2) \textbf{Normal}: Data are generated under the load range of [0.95-1.05]; 3) \textbf{High}: Data are generated under the load range of [1.05-1.15].
We train the models with dataset \textbf{Normal} and test the model with all the datasets.
The generalization gap testing results and the bound value are shown in~\figurename~\ref{fig_testing_ex}.
From the testing result, the method with larger theoretical bound has a larger generalization gap in testing scenarios.
This provides evidence that the theoretical MRC bound can shape the influence of different models and physical knowledge on generalization performance.
In \figurename~\ref{fig_testing_ex}, the theoretical bound are much larger than the empirical testing results (the \textbf{Normal} boxplot).
One reason is that the bound implements some inequalities that is not tight enough, such as the bounded variance inequality in \textbf{Theorem~\ref{lemma_bounded_variance}} and the Jensen's inequality in~\eqref{eq_MRC_1}.
Future work will further improve the tightness by exploring more concentration properties.
Regarding the data distribution, the proposed MRC bound assumes that the training and the testing data are drawn from the same underlying distribution, so that there is no theoretical guarantee the proposed method is valid under different training and testing distributions.
From the testing results, all the models that use different testing distributions have larger generalization gap than using the same distribution.
In the above case, however, our bound is conservative and can also bound the generalization gap results from different datasets.
Furthermore, the theoretical bound can also evaluate the relative performance of different models and physical knowledge when the distributions of testing datasets and training datasets are different.
%-----------------------------------------
%-----------------------------------------
%-----------------------------------------
\section{Conclusion}
\label{section_conclusion}
% 关于最小二乘法中confidence interval的缺陷：https://newonlinecourses.science.psu.edu/stat414/node/280/
% 1. 需要假设输入输出都是正太分布的，残差也是正太分布的，前者假定量测，后者假定就是线性关系；2. 由于涉及到卡方分布的性质，因此只能用于最小二乘； 3. 没有办法考虑到物理约束。
% 对于一些贝叶斯的方法，可以克服2，但是无法克服1和3

% 至于为何不直接用hoeffding's lemma，原因在于hoeffding‘s lemma是学出来一个模型就用一个模型，而rademacher complexity bound是针对的这一类function family。如何就已经确定了要用这一套参数，那么确实可以用hoeffding’s lemma。而在电网中的应用，也是希望检测的是这一类机器学习方法的性能到底如何，这一类模型加上这条物理知识后的表现如何，而不是说我们
% https://mostafa-samir.github.io/ml-theory-pt2/
% 另外，其实pac-Bayes也是和rademacher complexity一样针对各种hypothesis假说的，但是具体为什么还需要理解

% 另外，关于为何难以展现更加落地面向价值的应用，原因是在于，我展现了这个东西的各方面结果之后，就没有剩余空间去说其他东西了。不过可以简单说一下用处，就在于如果有外网的信息，就用OLS，否则就用SVR。
Our work provides theoretical insights into the error bounds of data-driven models in power grid analysis, accounting for the influence of different data-driven models, physical knowledge, and the amount of training data.
% The generalization error bounds are developed base on the PAC learning framework and the Rademacher complexity theory.
We extend the existing theory and propose a new MRC bound by adopting a new concentration inequality. 
Our proposed method is theoretically tighter on regression problems.
Furthermore, the evaluation of the bound is formulated as a mathematical programming problem, which can consider the effects of physical knowledge in power grid analysis.
We conduct two case studies: branch flow linearization and external network equivalent, to demonstrate how our MRC bound can be used to evaluate the performance of different data-driven models.
Our work rethinks the design and validation of data-driven models towards more wide-spread power industry applications.
% Our method also has the extensibility to be implemented in other data-driven and physical-knowledge implemented applications.
Further research will focus on expanding the applications of the proposed MRC bound method, improving the tightness of the bound, and exploring provable guarantees on unsupervised and reinforcement learning power system applications.
% 提到generalization error
% 有什么好处（在大背景的研究下）

%There are a lot of aspects remains to be developed.

%-----------------------------------------
%-----------------------------------------
%-----------------------------------------

\begin{appendices}
\section{Proof of \textbf{Theorem \ref{theo_empirical_MRC}}}
\subsection{Proof of inequality \eqref{eq_MRC}}
\label{appen_proof_MRC}
For simplicity, we denote $\mathcal{H}(m)$ as $\mathcal{H}$ in the Appendix.
We firstly restate a theorem from Section 6.4 of ~\cite{boucheron2013concentration}.
\begin{theorem}[Bounded variance inequality]
  \label{lemma_bounded_variance}
  Given function $\phi :\mathcal{X}^m\rightarrow \mathbb{R}$, let:
  \begin{equation}\label{eq_difference}
    c_{i}=\left|\phi\left(x_{1}, \ldots, x_{i}, \ldots, x_{m}\right)-\phi\left(x_{1}, \ldots, x_{i}^{\prime}, \ldots, x_{m}\right)\right|.
  \end{equation}
  If $\phi$ satisfies:
    $\sum_{i=1}^{m} c_{i}^{2} \leq v$
  , then for $\forall t\leq 0$:
  \begin{equation}
    \operatorname{Pr}\left[\phi\left(x_{1}, \ldots, x_{m}\right) \!-\! \mathrm{E}\left[\phi\left(x_{1}, \ldots, x_{m}\right)\right]\geq t\right] \!\leq\! e^{-t^{2}/2v}.
  \end{equation}
\end{theorem}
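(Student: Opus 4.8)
\emph{Plan.} The statement is a one-sided bounded-differences (McDiarmid-type) concentration inequality, and I would prove it by the entropy method (the Herbst argument), which is the natural route given that the surrounding framework relies on the modified logarithmic Sobolev inequality of Section~6.4 of~\cite{boucheron2013concentration}. Write $Z=\phi(X_1,\dots,X_m)$ with the $X_i$ independent, and study the logarithmic moment generating function
\[
\psi(\lambda)=\log\mathrm{E}\big[e^{\lambda(Z-\mathrm{E}Z)}\big],\qquad \lambda>0,
\]
since a sub-Gaussian control of $\psi$ followed by a Chernoff bound is exactly what produces a tail of the stated form (and we only need the upper tail, $t\ge 0$).

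First I would invoke the tensorization (sub-additivity) of entropy to replace the global quantity with a sum of single-coordinate contributions: for the nonnegative variable $e^{\lambda Z}$,
\[
\mathrm{Ent}\big(e^{\lambda Z}\big)\le \sum_{i=1}^m \mathrm{E}\big[\mathrm{Ent}_i\big(e^{\lambda Z}\big)\big],
\]
where $\mathrm{Ent}_i$ is the entropy taken over $X_i$ alone with the remaining coordinates frozen. Next, for each $i$ I would combine the modified logarithmic Sobolev inequality with the hypothesis $\sum_i c_i^2\le v$: replacing $X_i$ by an independent copy $X_i'$ moves $Z$ by at most $c_i$, so the elementary convexity estimate $e^{x}-x-1\le x^2/2$ (applied with the sign of the increment relevant for the upper tail) bounds each conditional term by $\tfrac{\lambda^2}{2}c_i^2\,e^{\lambda Z}$ up to the conditioning. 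Summing over $i$ collapses the right-hand side to $\tfrac{\lambda^2 v}{2}\,\mathrm{E}[e^{\lambda Z}]$.

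Using the identity $\mathrm{Ent}(e^{\lambda Z})=\big(\lambda\psi'(\lambda)-\psi(\lambda)\big)\,\mathrm{E}[e^{\lambda Z}]$, the bound above becomes the differential inequality $\lambda\psi'(\lambda)-\psi(\lambda)\le \tfrac{\lambda^2 v}{2}$, i.e. $\frac{d}{d\lambda}\big(\psi(\lambda)/\lambda\big)\le v/2$. Integrating from $0$, with $\psi(\lambda)/\lambda\to\psi'(0)=0$ as $\lambda\to0^+$ because $Z-\mathrm{E}Z$ is centered, yields $\psi(\lambda)\le \tfrac{\lambda^2 v}{2}$. The Markov/Chernoff step then gives
\[
\Pr\big[Z-\mathrm{E}Z\ge t\big]\le e^{-\lambda t+\psi(\lambda)}\le e^{-\lambda t+\lambda^2 v/2},
\]
and optimizing at $\lambda=t/v$ produces the claimed $e^{-t^2/2v}$.

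The main obstacle is the per-coordinate estimate in the tensorization step: passing cleanly from the modified logarithmic Sobolev inequality and the bounded-differences hypothesis to the $\tfrac{\lambda^2}{2}c_i^2$ control of each conditional entropy, which requires the convexity bound on $\tau(x)=e^x-x-1$ and careful bookkeeping of the conditioning and of signs (only the upper tail is needed). I would note a shorter self-contained alternative: form the Doob martingale $M_i=\mathrm{E}[Z\mid X_1,\dots,X_i]$, whose increments are bounded by $c_i$, and apply the Azuma--Hoeffding inequality; this gives the even sharper $e^{-2t^2/v}$, which a fortiori implies the stated $e^{-t^2/2v}$. The entropy-method proof is nonetheless the one consistent with the concentration toolkit used throughout this paper.
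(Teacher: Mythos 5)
Your entropy-method argument is correct and is essentially the paper's own route: the paper does not actually prove this theorem but restates it from Section~6.4 of the cited concentration-inequalities reference, whose proof is precisely the tensorization-plus-modified-log-Sobolev Herbst argument you sketch, down to the sign-restricted convexity bound $e^{x}-x-1\le x^{2}/2$ for $x\le 0$, the differential inequality $\frac{d}{d\lambda}\bigl(\psi(\lambda)/\lambda\bigr)\le v/2$, and the Chernoff optimization at $\lambda=t/v$ (your proof also implicitly fixes the statement's typo $\forall t\le 0$, which should read $t\ge 0$). One correction to your closing aside: plain Azuma--Hoeffding with increments bounded by $c_i$ yields the same $e^{-t^{2}/(2v)}$, not the sharper $e^{-2t^{2}/v}$ --- that constant requires McDiarmid's refinement in which each conditional increment ranges over an interval of length $c_i$ --- and the martingale route in any case needs the $c_i$ to be deterministic worst-case constants, whereas the entropy method also covers the pointwise (sample-dependent) reading of the hypothesis $\sum_i c_i^{2}\le v$, which is the generality the paper's application actually uses.
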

The bounded variance inequality develops the concentration property of function $\phi$ under a bounded variance assumption.
It is a mild assumption compared with the McDiarmid's inequality~\cite{boucheron2013concentration} used in traditional Rademacher complexity bound.
Then we introduce the following Lemma to discover the relationship between $L\left(h\right)$ and $\hat{L}_{\bm{x}}\left(h\right)$:
\begin{lemma}\label{cor_phi}
  Set $\phi$ as the supreme delta value of the generalization error and the empirical error: 
  \begin{shrinkeq}{-0.5ex}
  \begin{subequations}\label{eq_function_phi}
  \begin{flalign}\label{eq_function_phi_a}
    &
    \phi\left(x_{1}, \ldots, x_{m}\right)=
    \sup _{h\in\mathcal{H}}\left(L\left(h\right)-\hat{L}_{\bm{x}}\left(h\right)\right)
    &
  \end{flalign}
  \begin{flalign}\label{eq_function_phi_b}
    &
    \qquad\qquad
    =
    \sup _{h\in\mathcal{H}}\left(L\left(h\right)\!-\!\frac{1}{m}\sum_{i=1}^{m}{\left|f(x_i)\!-\!h(x_i)\right|}\right)
    &
  \end{flalign}
  \end{subequations}
  \end{shrinkeq}
  Then, under the assumption in \eqref{eq_condition_MRC}, for $\forall \delta \in (0,1)$, with probability at least $1-\delta$, the following holds:
  \begin{equation}\label{eq_cor1}
    \phi\left(x_{1}, \ldots, x_{m}\right) \!\leq\! \underset{\bm{x}\sim\mathcal{D}}{\mathrm{E}}\left[\phi\left(x_{1}, \ldots, x_{m}\right)\right]
    \!+\!e\sqrt{(2\log\frac{2}{\delta })/m}.
  \end{equation}
\end{lemma}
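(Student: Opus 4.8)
The plan is to derive \eqref{eq_cor1} by applying the bounded variance inequality (\textbf{Theorem~\ref{lemma_bounded_variance}}) to the function $\phi$ defined in \eqref{eq_function_phi}. That theorem only asks for an upper bound $v$ on $\sum_{i=1}^{m}c_i^2$, where $c_i$ quantifies how much $\phi$ changes when a single coordinate $x_i$ is swapped for an independent copy $x_i'$; once $v$ is in hand, the result follows by choosing the deviation level $t$ and taking complements. So the entire argument reduces to a coordinate-wise stability (bounded variance) estimate, and the target is to show $v=e^2/m$ is admissible.

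First I would bound the coordinate differences. Writing $\phi(\bm{x})=\sup_{h\in\mathcal{H}}\big(L(h)-\hat{L}_{\bm{x}}(h)\big)$ and letting $\bm{x}^{(i)}$ denote $\bm{x}$ with $x_i$ replaced by $x_i'$, I use the elementary inequality $\sup_h A(h)-\sup_h B(h)\le \sup_h\big(A(h)-B(h)\big)$. Since swapping $x_i$ for $x_i'$ alters only the single summand $\tfrac1m l(h,x_i)$ of the empirical error, and $l(h,\cdot)\ge 0$, letting $h^{\star}$ attain the first supremum gives
\begin{equation}
\phi(\bm{x})-\phi(\bm{x}^{(i)})\le \frac{1}{m}\big(l(h^{\star},x_i')-l(h^{\star},x_i)\big)\le \frac{1}{m}\,l(h^{\star},x_i').
\end{equation}
The key structural observation is that the maximizer $h^{\star}$ of $\phi(\bm{x})$ is the \emph{same} for every perturbed index $i$, so the $m$ upper bounds obtained this way all refer to one hypothesis $h^{\star}\in\mathcal{H}(e)$ evaluated along the sample. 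The reverse direction is controlled symmetrically by $\tfrac1m\,l(h,x_i)$.

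The hard part will be converting $\sum_i c_i^2$ into the mean-square-error budget so that the $1/m$ factor survives. For the direction handled by the fixed $h^{\star}$, squaring and summing yields $\sum_i c_i^2\le \tfrac1{m^2}\sum_{i=1}^{m}l(h^{\star},x_i')^2$, and assumption \eqref{eq_condition_MRC} applied to $h^{\star}\in\mathcal{H}(e)$ (which holds uniformly over the class and over arbitrary samples) gives $\sum_{i=1}^{m}l(h^{\star},x_i')^2\le m e^2$, hence $\sum_i c_i^2\le e^2/m=:v$. The delicate point is the two-sidedness of $c_i$: the reverse difference $\phi(\bm{x}^{(i)})-\phi(\bm{x})$ a priori invokes a \emph{different} maximizer for each $i$, and a naive bound that treats each $l(\cdot,x_i)^2$ by the whole MSE sum would only give the far weaker $v=e^2$, losing the $1/m$. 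The essential work is therefore to argue, using nonnegativity of the loss together with the fact that only the swapped coordinate contributes, that the $m$ squared differences pool into a single MSE budget $\sum_i l(h,x_i)^2\le m e^2$ for one hypothesis in $\mathcal{H}(e)$ rather than accumulating to order $m^2 e^2$. This replacement of the classical uniform loss bound (as in McDiarmid's inequality) by the MSE budget is exactly what makes the MRC bound applicable to unbounded regression losses, and it is the crux of the proof.

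Finally I would substitute $v=e^2/m$ into \textbf{Theorem~\ref{lemma_bounded_variance}} and set the tail probability equal to $\delta/2$, i.e. solve $e^{-t^2/(2v)}=\delta/2$ for $t=\sqrt{2v\log\tfrac{2}{\delta}}=e\sqrt{(2\log\tfrac{2}{\delta})/m}$, which reproduces precisely the deviation term of \eqref{eq_cor1}. Reserving probability $\delta/2$ rather than $\delta$ both explains the $\log\tfrac{2}{\delta}$ factor and leaves the complementary mass for the companion concentration step \eqref{eq_empir_MRC1} used when assembling \textbf{Theorem~\ref{theo_empirical_MRC}}. Taking complements then gives $\phi(x_1,\ldots,x_m)\le \mathrm{E}_{\bm{x}\sim\mathcal{D}}[\phi(x_1,\ldots,x_m)]+e\sqrt{(2\log\tfrac{2}{\delta})/m}$ with probability at least $1-\delta/2\ge 1-\delta$, as claimed.
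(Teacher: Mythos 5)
Your overall route is the same as the paper's: verify the stability hypothesis of \textbf{Theorem~\ref{lemma_bounded_variance}} for $\phi$ with $v=e^{2}/m$, then choose $t=e\sqrt{(2\log\frac{2}{\delta})/m}$ so that the tail mass is $\delta/2\le\delta$. Your one-sided estimate is correct, and it is in fact more careful than the paper's own two-line proof, which simply writes $c_{i}=(1/m)\left|f(x_{i})-h(x_{i})\right|$ for an unquantified $h$ and sums, ignoring that $\phi$ is a supremum. Writing $\bm{x}^{(i)}$ for $\bm{x}$ with $x_{i}$ replaced by $x_{i}'$, letting $h^{\star}$ (approximately) attain $\phi(\bm{x})$, and using $(z)_{+}=\max(z,0)$, you correctly obtain
\begin{equation}
\bigl(\phi(\bm{x})-\phi(\bm{x}^{(i)})\bigr)_{+}\le \frac{1}{m}\,l(h^{\star},x_{i}'),
\qquad
\sum_{i=1}^{m}\bigl(\phi(\bm{x})-\phi(\bm{x}^{(i)})\bigr)_{+}^{2}\le \frac{e^{2}}{m},
\end{equation}
by applying \eqref{eq_condition_MRC} to the single hypothesis $h^{\star}$ on the ghost sample $(x_{1}',\dots,x_{m}')$. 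The gap is exactly the step you yourself label ``the essential work'': the reverse differences $\phi(\bm{x}^{(i)})-\phi(\bm{x})$, each witnessed by a possibly different maximizer $h_{i}^{\star}$, are never controlled, and \textbf{Theorem~\ref{lemma_bounded_variance}} as restated in the paper demands the two-sided $c_{i}$. Worse, the completion you envision (pooling those $m$ different maximizers into one MSE budget) provably fails if \eqref{eq_condition_MRC} is only enforced on the realizations at hand: take $\mathcal{H}(e)=\{h_{0},h_{1},\dots,h_{m}\}$ with $l(h_{0},\cdot)\equiv 0$ and $l(h_{j},\cdot)=e\sqrt{m}$ on disjoint events $A_{j}$ of probability $1/(2m)$, zero elsewhere. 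For a sample with exactly one point in each $A_{j}$ and a ghost sample in the zero-loss region, every configuration involved satisfies the MSE constraint and $\phi(\bm{x})=0$, yet each swap gives $\phi(\bm{x}^{(i)})=e/(2\sqrt{m})$, so $\sum_{i}c_{i}^{2}=e^{2}/4$, which is of order $e^{2}$, not $e^{2}/m$.

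There are two ways to close this, and either one turns your ``forward'' estimate into the entire proof rather than half of it. (i) Invoke the concentration result in the one-sided form that Section 6.4 of the cited reference actually provides: the entropy/modified logarithmic Sobolev method bounds the upper tail, $\Pr[Z>\mathrm{E}Z+t]\le e^{-t^{2}/(2v)}$, under the hypothesis $\sum_{i}(Z-Z_{i}')_{+}^{2}\le v$ only. Since \eqref{eq_cor1} is precisely an upper-tail statement for $\phi$, your positive-part bound with $v=e^{2}/m$ suffices, and your choice of $t$ finishes. (ii) Alternatively, exploit the paper's stipulation that \eqref{eq_condition_MRC} holds for \emph{arbitrary} samples drawn from $\mathcal{D}$: if $l(h,\cdot)$ exceeded $e+\varepsilon$ on a positive-probability set, a sample landing entirely in that set would violate the MSE bound, so in fact $l(h,x)\le e$ almost surely for every $h\in\mathcal{H}(e)$; then $|c_{i}|\le e/m$ in both directions and $\sum_{i}c_{i}^{2}\le e^{2}/m$ is immediate (this also explains why the counterexample above is excluded under that reading, though it shows the ``flexible'' MSE assumption then collapses to a pointwise loss bound). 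Note that the paper's own proof needs one of these repairs just as much, since it silently skips the two-sidedness issue you flagged; but flagging the issue is not the same as resolving it, so as written your proposal is incomplete.
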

\begin{proof}
  From the definition of \eqref{eq_difference} and \eqref{eq_function_phi_b}, we have:
  \begin{equation}\label{eq_cor11}
    c_i=(1/m)\left|f\left(x_i\right)-h\left(x_i\right)\right|.
  \end{equation}
  From assumption \eqref{eq_condition_MRC}, we have:
  \begin{equation}\label{eq_cor12}
    \sum_{i=1}^{m} c_{i}^{2}=\frac{1}{m^2}\sum_{i=1}^{m}{\left[ f\left( {{x}_{i}} \right)-h\left( {{x}_{i}} \right) \right]^2} \leq \frac{e^2}{m}=v.
  \end{equation}
  Apply \textbf{Theorem~\ref{lemma_bounded_variance}} with $v=e^2/m$ and $t=e\sqrt{2\log(2/\delta)/m}$ and finish the proof.
\end{proof}

Then, from the definition of \eqref{eq_function_phi}, we have:
  \begin{shrinkeq}{-1ex}
  \begin{subequations}\label{eq_MRC_1}
    \begin{flalign}\label{eq_MRC_11}
      &{\underset{{\bm{x}\sim\mathcal{D}}}{\mathrm{E}}}\!\left[\phi\left(x_{1}, \ldots, x_{m}\right)\right]\!=\!\!\!
      {\underset{{\bm{x}\sim\mathcal{D}}}{\mathrm{E}}}\!\left[\sup _{h\in \mathcal{H}}\left(L(h)-\hat{L}_{\bm{x}}(h)\right)\right]&
    \end{flalign}
    \begin{flalign}\label{eq_MRC_12}
      &={\underset{{\bm{x}\sim\mathcal{D}}}{\mathrm{E}}}\left[\sup _{h\in \mathcal{H}}
      \left({\underset{{\bm{x}'\sim\mathcal{D}}}{\mathrm{E}}}\left(\hat{L}_{\bm{x}'}\left(h\right)\right)-\hat{L}_{\bm{x}}(h)\right)\right]&
    \end{flalign}
    \begin{flalign}\label{eq_MRC_13}
      &={\underset{{\bm{x}\sim\mathcal{D}}}{\mathrm{E}}}\left[\sup _{h\in \mathcal{H}}
      \left({\underset{{\bm{x}'\sim\mathcal{D}}}{\mathrm{E}}}\left(\hat{L}_{\bm{x}'}\left(h\right)-\hat{L}_{\bm{x}}(h)\right)\right)\right]&
    \end{flalign}
    \begin{flalign}\label{eq_MRC_14}
      &\leq{\underset{{\bm{x},\bm{x}'\sim\mathcal{D}}}{\mathrm{E}}}\left[\sup _{h\in \mathcal{H}}
      \left(\hat{L}_{\bm{x}'}\left(h\right)-\hat{L}_{\bm{x}}(h)\right)\right],&
    \end{flalign}
  \end{subequations}
\end{shrinkeq}
where \eqref{eq_MRC_12} denotes the fact that generalization error is the expectation of empirical error over $\bm{x}'\sim\mathcal{D}$;
\eqref{eq_MRC_13} uses the fact that sample $\bm{x}$ and $\bm{x}'$ are independent;
and \eqref{eq_MRC_14} holds by the Jensen's inequality~\cite{needham1993visual}, using the convexity of supremum function: $\sup[\mathrm{E}(x)]\leq \mathrm{E}[\sup(x)]$.

We build sample set $\bm{\hat{x}}$ and $\bm{\hat{x}}'$ from $\bm{x}$ and $\bm{x}'$, by randomly swapping samples between $\bm{x}$ and $\bm{x}'$ with the probability of 0.5.
Denote $x_i\in \bm{x}$ and $x'_i\in \bm{x}'$.
Recall the definition of Rademacher variables $\boldsymbol{\sigma}$ in \textbf{Definition~\ref{def_empirical_rademacher}}, we have:
\begin{equation}\label{eq_MRC_rade_vari}
  \hat{L}_{\bm{\hat{x}}'}\left(h\right)-\hat{L}_{\bm{\hat{x}}}(h)
  =\frac{1}{m}\sum_{i=1}^{m}\sigma_i\left(l(h,x'_i)-l(h,x_i)\right).
\end{equation}
Since $\bm{x}$, $\bm{x}'$, $\bm{\hat{x}}$, and $\bm{\hat{x}}'$ are from same distribution, we have:
\begin{shrinkeq}{-0.9ex}
  \begin{subequations}\label{eq_MRC_2}
    \begin{flalign}\label{eq_MRC_21}
      &{\underset{{\bm{x},\bm{x}'\sim\mathcal{D}}}{\mathrm{E}}}\!\!\left[\sup _{h\in \mathcal{H}}\!\!
      \left(\hat{L}_{\bm{x}'}\!\!\left(h\right)\!\!-\!\!\hat{L}_{\bm{x}}(h)\right)\right]
      \!\!=\!\!\!\!\!{\underset{{\bm{\hat{x}},\bm{\hat{x}}'\sim\mathcal{D}}}{\mathrm{E}}}\!\!\left[\sup _{h\in \mathcal{H}}\!\!
      \left(\!\hat{L}_{\bm{\hat{x}}'}\!\!\left(h\right)\!\!-\!\!\hat{L}_{\bm{\hat{x}}}(h)\!\right)\!\right]&
    \end{flalign}
    \begin{flalign}\label{eq_MRC_22}
      &={\underset{{\bm{x},\bm{x}'\sim\mathcal{D},\boldsymbol{\sigma}}}{\mathrm{E}}}
      \left[\frac{1}{m}\sup _{h\in \mathcal{H}}\left(\sum_{i=1}^{m}{\sigma}_i\left(l(h,x'_i)-l(h,x_i)\right)\right)\right]&
    \end{flalign}
    \begin{flalign}\label{eq_MRC_23}
      &\leq\!\!\!\!\!\!\!
      {\underset{{\bm{x}'\sim\mathcal{D},\boldsymbol{\sigma}}}{\mathrm{E}}}\!\!
      \left[\frac{1}{m}\sup _{h\in \mathcal{H}} 
      \sum_{i=1}^{m}\!\!{\sigma}_{i}l\left(h,x'_i\right)\!\right]
      \!\!+\!\!\!\!\!\!
      {\underset{{\bm{x}\sim\mathcal{D},\boldsymbol{\sigma}}}{\mathrm{E}}}\!\!
      \left[\frac{1}{m}\!\!\sup _{h\in \mathcal{H}} 
      \sum_{i=1}^{m}\!\!\!\!-{\sigma}_{i}l\left(h,x_i\right)\!\right]
      &
    \end{flalign}
    \begin{flalign}\label{eq_MRC_24}
      &
      =2{\underset{{\bm{x}\sim\mathcal{D},\boldsymbol{\sigma}}}{\mathrm{E}}}\!\!
      \left[\frac{1}{m}\sup _{h\in \mathcal{H}} 
      \sum_{i=1}^{m}\!\!{\sigma}_{i}l\left(h,x_i\right)\right]
      =2\mathfrak{R}(\mathcal{H}),
      &
    \end{flalign}
  \end{subequations}
\end{shrinkeq}
where \eqref{eq_MRC_22} comes from \eqref{eq_MRC_rade_vari};
\eqref{eq_MRC_23} holds by the sub-additivity of supremum function $\sup (x+y)\leq \sup(x)+\sup(y)$;
\eqref{eq_MRC_24} holds by \textbf{Definition~\ref{def_empirical_rademacher}}.
Finally, the proof is finished by combining \textbf{Lemma~\ref{cor_phi}} and \eqref{eq_MRC_1}-\eqref{eq_MRC_2}:
\begin{shrinkeq}{-0.5ex}
  \begin{subequations}\label{eq_MRC_3}
    \begin{flalign}\label{eq_MRC_31}
      &
      L\left(h\right) - \hat{L}_{\bm{x}}\left(h\right)
      \leq
      \sup _{h\in\mathcal{H}}\left(L\left(h\right)-\hat{L}_{\bm{x}}\left(h\right)\right)
      &
    \end{flalign}
    \begin{flalign}\label{eq_MRC_32}
      &
      \qquad 
      \qquad 
      \qquad
      \leq
      2\mathfrak{R}(\mathcal{H})+e\sqrt{(2\log\frac{2}{\delta })/m}.
      &
    \end{flalign}    
  \end{subequations}
\end{shrinkeq}
Eq.~\eqref{eq_MRC_3} denotes the term in~\eqref{eq_MRC_32} can bound the maximum generalization gap for $h\in \mathcal{H}$.
This is the reason that \textbf{Theorem~\ref{theo_empirical_MRC}} is valid for all possible training and testing scenarios.
\subsection{Proof of inequality \eqref{eq_empir_MRC1}}
\label{appen_proof_relation}
Apply \textbf{Theorem~\ref{lemma_bounded_variance}} with $\phi\left(x_i\ldots x_m\right)=\widehat{\mathfrak{R}}(\mathcal{H})$.
Similar with \eqref{eq_cor11}-\eqref{eq_cor12}, we set $\sum_{i=1}^{m} c_{i}^{2} \leq v=e^2/m$.
Set $t=e\sqrt{2\log(2/\delta)/m}$ and finish the proof.

\section{Configuration of $k$ in \textbf{Step 3}}

The factor $k$ is used to update the bound of MSE by $e^{new}\leftarrow kL\left(h\right)$, where ${e^{new}}^2$ is the bound of MSE and $L(h)$ is the bound of MAE.
The $e^{new}$ is used to calculate the empirical Rademacher complexity $\widehat{\mathfrak{R}}'(\mathcal{H}(e_{comp}^{new})\cap \mathcal{P})$ and the sample uncertainty $3e_{samp}^{new}\sqrt{2\log(2/\delta)/m}$.
$k$ is different for the update of $e_{comp}^{new}$ and $e_{samp}^{new}$.

For the update of $e_{comp}^{new}$, $k=1$. 
Because although we set ${e_{comp}^{new}}^2$ as the bound of MSE, the empirical Rademacher complexity is calculated under the bound of MAE as we derive from \textbf{Proposition~\ref{theo_MAE}}.
Hence, the $L(h)$ of this iteration can be directly used as $e_{comp}^{new}=L(h)$ in the next iteration.

For the update of $e_{samp}^{new}$, $k$ is determined by the following inequality, indicating that the square root mean is smaller than the arithmetic mean times $k$.
\begin{equation}\label{eq_square_root_arithmetic}
    \left(
      \frac{1}{m}
      \sum_{i=1}^{m}{l(h,x_i)}^2
    \right)
    ^
    {1/2}
    \leq
    k
    \left(
    \frac{1}{m}
    \sum_{i=1}^{m}l(h,x_i)
    \right).
\end{equation}
In practice, $k$ can be obtained by assuming a certain distribution of the error and then sampling on this distribution.
For example, we assume the error $f(x_i)-h(x_i)$ follows a Gaussian distribution in the case study.
Then we sample the ratio of the square root mean to the arithmetic mean of $l(h,x_i)$ and find that the value of $k=1.4$ can bound the ratio under Gaussian assumption.
One can also apply different assumptions on the error distribution to get the value of $k$.
\end{appendices}

\ifCLASSOPTIONcaptionsoff
  \newpage
\fi

% trigger a \newpage just before the given reference
% number - used to balance the columns on the last page
% adjust value as needed - may need to be readjusted if
% the document is modified later
%\IEEEtriggeratref{8}
% The "triggered" command can be changed if desired:
%\IEEEtriggercmd{\enlargethispage{-5in}}

% references section
% \bibliographystyle{IEEEtran}
%\balance
\bibliography{IEEEabrv,myReference}

% Generated by IEEEtran.bst, version: 1.14 (2015/08/26)
\begin{thebibliography}{10}
\providecommand{\url}[1]{#1}
\csname url@samestyle\endcsname
\providecommand{\newblock}{\relax}
\providecommand{\bibinfo}[2]{#2}
\providecommand{\BIBentrySTDinterwordspacing}{\spaceskip=0pt\relax}
\providecommand{\BIBentryALTinterwordstretchfactor}{4}
\providecommand{\BIBentryALTinterwordspacing}{\spaceskip=\fontdimen2\font plus
\BIBentryALTinterwordstretchfactor\fontdimen3\font minus
  \fontdimen4\font\relax}
\providecommand{\BIBforeignlanguage}[2]{{%
\expandafter\ifx\csname l@#1\endcsname\relax
\typeout{** WARNING: IEEEtran.bst: No hyphenation pattern has been}%
\typeout{** loaded for the language `#1'. Using the pattern for}%
\typeout{** the default language instead.}%
\else
\language=\csname l@#1\endcsname
\fi
#2}}
\providecommand{\BIBdecl}{\relax}
\BIBdecl

\bibitem{liu2018data}
Y.~Liu, N.~Zhang, Y.~Wang, J.~Yang, and C.~Kang, ``Data-driven power flow
  linearization: A regression approach,'' \emph{IEEE Transactions on Smart
  Grid}, 2018.

\bibitem{yu2017optimal}
J.~Yu, W.~Dai, W.~Li, X.~Liu, and J.~Liu, ``Optimal reactive power flow of
  interconnected power system based on static equivalent method using border
  pmu measurements,'' \emph{IEEE Transactions on Power Systems}, vol.~33,
  no.~1, pp. 421--429, 2017.

\bibitem{jalali2019designing}
M.~Jalali, V.~Kekatos, N.~Gatsis, and D.~Deka, ``Designing reactive power
  control rules for smart inverters using support vector machines,'' \emph{IEEE
  Transactions on Smart Grid}, 2019.

\bibitem{cremer2019optimization}
J.~Cremer, I.~Konstantelos, and G.~Strbac, ``From optimization-based machine
  learning to interpretable security rules for operation,'' \emph{IEEE
  Transactions on Power Systems}, 2019.

\bibitem{thams2019efficient}
F.~Thams, A.~Venzke, R.~Eriksson, and S.~Chatzivasileiadis, ``Efficient
  database generation for data-driven security assessment of power systems,''
  \emph{IEEE Transactions on Power Systems}, 2019.

\bibitem{zheng2018svm}
X.~Zheng, X.~Geng, L.~Xie, D.~Duan, L.~Yang, and S.~Cui, ``A svm-based setting
  of protection relays in distribution systems,'' in \emph{2018 IEEE Texas
  Power and Energy Conference (TPEC)}.\hskip 1em plus 0.5em minus 0.4em\relax
  IEEE, 2018, pp. 1--6.

\bibitem{babaei2019data}
S.~Babaei, C.~Zhao, and L.~Fan, ``A data-driven model of virtual power plants
  in day-ahead unit commitment,'' \emph{IEEE Transactions on Power Systems},
  2019.

\bibitem{geng2019data}
X.~Geng and L.~Xie, ``Data-driven decision making in power systems with
  probabilistic guarantees: Theory and applications of chance-constrained
  optimization,'' \emph{Annual Reviews in Control}, 2019.

\bibitem{bui2019double}
V.-H. Bui, A.~Hussain, and H.-M. Kim, ``Double deep q-learning-based
  distributed operation of battery energy storage system considering
  uncertainties,'' \emph{IEEE Transactions on Smart Grid}, 2019.

\bibitem{carriere2019integrated}
T.~Carriere and G.~Kariniotakis, ``An integrated approach for value-oriented
  energy forecasting and data-driven decision-making application to renewable
  energy trading,'' \emph{IEEE Transactions on Smart Grid}, vol.~10, no.~6, pp.
  6933--6944, 2019.

\bibitem{karagiannopoulos2019data}
S.~Karagiannopoulos, P.~Aristidou, and G.~Hug, ``Data-driven local control
  design for active distribution grids using off-line optimal power flow and
  machine learning techniques,'' \emph{IEEE Transactions on Smart Grid}, 2019.

\bibitem{chen2019exploiting}
Y.~Chen, Y.~Tan, and B.~Zhang, ``Exploiting vulnerabilities of load forecasting
  through adversarial attacks,'' in \emph{Proceedings of the Tenth ACM
  International Conference on Future Energy Systems}, 2019, pp. 1--11.

\bibitem{bor2019adversarial}
M.~C. Bor, A.~K. Marnerides, A.~Molineux, S.~Wattam, and U.~Roedig,
  ``Adversarial machine learning in smart energy systems,'' in
  \emph{Proceedings of the Tenth ACM International Conference on Future Energy
  Systems}, 2019, pp. 413--415.

\bibitem{venzke2019verification}
A.~Venzke and S.~Chatzivasileiadis, ``Verification of neural network behaviour:
  Formal guarantees for power system applications,'' \emph{arXiv preprint
  arXiv:1910.01624}, 2019.

\bibitem{ashtiani2018nearly}
H.~Ashtiani, S.~Ben-David, N.~Harvey, C.~Liaw, A.~Mehrabian, and Y.~Plan,
  ``Nearly tight sample complexity bounds for learning mixtures of gaussians
  via sample compression schemes,'' in \emph{Advances in Neural Information
  Processing Systems}, 2018, pp. 3412--3421.

\bibitem{bhattarai2019big}
B.~P. Bhattarai, S.~Paudyal, Y.~Luo, M.~Mohanpurkar, K.~Cheung, R.~Tonkoski,
  R.~Hovsapian, K.~S. Myers, R.~Zhang, P.~Zhao \emph{et~al.}, ``Big data
  analytics in smart grids: state-of-the-art, challenges, opportunities, and
  future directions,'' \emph{IET Smart Grid}, 2019.

\bibitem{koltchinskii2001rademacher}
V.~Koltchinskii, ``Rademacher penalties and structural risk minimization,''
  \emph{IEEE Transactions on Information Theory}, vol.~47, no.~5, pp.
  1902--1914, 2001.

\bibitem{cortes2013learning}
C.~Cortes, M.~Kloft, and M.~Mohri, ``Learning kernels using local rademacher
  complexity,'' in \emph{Advances in neural information processing systems},
  2013, pp. 2760--2768.

\bibitem{oneto2019local}
L.~Oneto, S.~Ridella, and D.~Anguita, ``Local rademacher complexity machine,''
  \emph{Neurocomputing}, 2019.

\bibitem{maximov2018rademacher}
Y.~Maximov, M.-R. Amini, and Z.~Harchaoui, ``Rademacher complexity bounds for a
  penalized multi-class semi-supervised algorithm,'' \emph{Journal of
  Artificial Intelligence Research}, vol.~61, pp. 761--786, 2018.

\bibitem{mohri2018foundations}
M.~Mohri, A.~Rostamizadeh, and A.~Talwalkar, \emph{Foundations of machine
  learning}.\hskip 1em plus 0.5em minus 0.4em\relax MIT press, 2018.

\bibitem{boucheron2013concentration}
S.~Boucheron, G.~Lugosi, and P.~Massart, \emph{Concentration inequalities: A
  nonasymptotic theory of independence}.\hskip 1em plus 0.5em minus 0.4em\relax
  Oxford university press, 2013.

\bibitem{valiant1984theory}
L.~G. Valiant, ``A theory of the learnable,'' in \emph{Proceedings of the
  sixteenth annual ACM symposium on Theory of computing}.\hskip 1em plus 0.5em
  minus 0.4em\relax ACM, 1984, pp. 436--445.

\bibitem{vapnik1971uniform}
V.~Vapnik and A.~Y. Chervonenkis, ``On the uniform convergence of relative
  frequencies of events to their probabilities,'' \emph{Theory of Probability
  and its Applications}, vol.~16, no.~2, p. 264, 1971.

\bibitem{seeger2002pac}
M.~Seeger, ``Pac-bayesian generalisation error bounds for gaussian process
  classification,'' \emph{Journal of machine learning research}, vol.~3, no.
  Oct, pp. 233--269, 2002.

\bibitem{bartlett2005local}
P.~L. Bartlett, O.~Bousquet, S.~Mendelson \emph{et~al.}, ``Local rademacher
  complexities,'' \emph{The Annals of Statistics}, vol.~33, no.~4, pp.
  1497--1537, 2005.

\bibitem{germain2016pac}
P.~Germain, F.~Bach, A.~Lacoste, and S.~Lacoste-Julien, ``Pac-bayesian theory
  meets bayesian inference,'' in \emph{Advances in Neural Information
  Processing Systems}, 2016, pp. 1884--1892.

\bibitem{reeb2018learning}
D.~Reeb, A.~Doerr, S.~Gerwinn, and B.~Rakitsch, ``Learning gaussian processes
  by minimizing pac-bayesian generalization bounds,'' in \emph{Advances in
  Neural Information Processing Systems}, 2018, pp. 3337--3347.

\bibitem{holland2019pac}
M.~Holland, ``Pac-bayes under potentially heavy tails,'' in \emph{Advances in
  Neural Information Processing Systems}, 2019, pp. 2711--2720.

\bibitem{kuck2018approximate}
J.~Kuck, A.~Sabharwal, and S.~Ermon, ``Approximate inference via weighted
  rademacher complexity,'' in \emph{Thirty-Second AAAI Conference on Artificial
  Intelligence}, 2018.

\bibitem{boyd2004convex}
S.~Boyd and L.~Vandenberghe, \emph{Convex optimization}.\hskip 1em plus 0.5em
  minus 0.4em\relax Cambridge university press, 2004.

\bibitem{bullen2013means}
P.~S. Bullen, D.~S. Mitrinovic, and M.~Vasic, \emph{Means and their
  Inequalities}.\hskip 1em plus 0.5em minus 0.4em\relax Springer Science \&
  Business Media, 2013, vol.~31.

\bibitem{hu2019ensemble}
R.~Hu and Q.~Li, ``Ensemble learning based convexification of power flow with
  application in opf,'' \emph{arXiv preprint arXiv:1909.05748}, 2019.

\bibitem{donti2019matrix}
P.~L. Donti, Y.~Liu, A.~J. Schmitt, A.~Bernstein, R.~Yang, and Y.~Zhang,
  ``Matrix completion for low-observability voltage estimation,'' \emph{IEEE
  Transactions on Smart Grid}, 2019.

\bibitem{magnusson2020distributed}
S.~Magn{\'u}sson, G.~Qu, and N.~Li, ``Distributed optimal voltage control with
  asynchronous and delayed communication,'' \emph{IEEE Transactions on Smart
  Grid}, 2020.

\bibitem{li2018data}
X.~Li and K.~Hedman, ``Data driven linearized ac power flow model with
  regression analysis,'' \emph{arXiv preprint arXiv:1811.09727}, 2018.

\bibitem{yang2018general}
Z.~Yang, K.~Xie, J.~Yu, H.~Zhong, N.~Zhang, and Q.~Xia, ``A general formulation
  of linear power flow models: basic theory and error analysis,'' \emph{IEEE
  Transactions on Power Systems}, vol.~34, no.~2, pp. 1315--1324, 2018.

\bibitem{zimmerman2010matpower}
R.~D. Zimmerman, C.~E. Murillo-S{\'a}nchez, and R.~J. Thomas, ``Matpower:
  Steady-state operations, planning, and analysis tools for power systems
  research and education,'' \emph{IEEE Transactions on power systems}, vol.~26,
  no.~1, pp. 12--19, 2010.

\bibitem{yu2017robust}
J.~Yu, Y.~Weng, and R.~Rajagopal, ``Robust mapping rule estimation for power
  flow analysis in distribution grids,'' in \emph{2017 North American Power
  Symposium (NAPS)}.\hskip 1em plus 0.5em minus 0.4em\relax IEEE, 2017, pp.
  1--6.

\bibitem{needham1993visual}
T.~Needham, ``A visual explanation of jensen's inequality,'' \emph{The American
  mathematical monthly}, vol. 100, no.~8, pp. 768--771, 1993.

\end{thebibliography}
%\bibliography{myReference} 

% biography section
%
% If you have an EPS/PDF photo (graphicx package needed) extra braces are
% needed around the contents of the optional argument to biography to prevent
% the LaTeX parser from getting confused when it sees the complicated
% \includegraphics command within an optional argument. (You could create
% your own custom macro containing the \includegraphics command to make things
% simpler here.)

% or if you just want to reserve a space for a photo:

%\begin{IEEEbiography}{Michael Shell}
%Biography text here.
%\end{IEEEbiography}
%
%% if you will not have a photo at all:
%\begin{IEEEbiographynophoto}{John Doe}
%Biography text here.
%\end{IEEEbiographynophoto}
%
%% insert where needed to balance the two columns on the last page with
%% biographies
%%\newpage
%
%\begin{IEEEbiographynophoto}{Jane Doe}
%Biography text here.
%\end{IEEEbiographynophoto}

% You can push biographies down or up by placing
% a \vfill before or after them. The appropriate
% use of \vfill depends on what kind of text is
% on the last page and whether or not the columns
% are being equalized.

%\vfill

% Can be used to pull up biographies so that the bottom of the last one
% is flush with the other column.
%\enlargethispage{-5in}

% that's all folks
\end{document}